\documentclass[journal]{IEEEtran}
\usepackage{amsmath,amssymb,amsfonts}
\usepackage{algorithmic}
\usepackage{graphicx}
\usepackage{textcomp}
\usepackage{xcolor}
\usepackage{times}
\usepackage{enumerate}
\usepackage{amsmath,amssymb,amsfonts}
\usepackage{mathtools} 
\usepackage{amsthm} 

\usepackage{cleveref}
\usepackage{mathrsfs}
\usepackage{comment}
\usepackage{cite}

\def\BibTeX{{\rm B\kern-.05em{\sc i\kern-.025em b}\kern-.08em
    T\kern-.1667em\lower.7ex\hbox{E}\kern-.125emX}}

\usepackage{placeins}
\usepackage{amsthm}
\newtheorem{definition}{Definition}

\newtheorem{theorem}{Theorem}
\newtheorem{corollary}{Corollary}
\newtheorem{assumption}{Assumption}
\newtheorem{remark}{Remark}

\newcommand{\Bern}{\operatorname{Bern}}
\newcommand{\ALG}{\mathrm{ALG}}
\newcommand{\TV}{\operatorname{TV}}
\newcommand{\KL}{\operatorname{KL}}
\newcommand{\Quantile}{\operatorname{Quantile}}
\newcommand{\WQuantile}{\operatorname{WQuantile}}

\ifCLASSINFOpdf
\else
\fi
\begin{document}
%
\title{Minimax Quantile Lower Bounds for Interactive Statistical Decision Making with Privacy}
%
%
%

\author{
\IEEEauthorblockN{Raghav Bongole, Amirreza Zamani, Tobias J. Oechtering, and Mikael Skoglund}\\
\IEEEauthorblockA{Department of Information Science and Engineering (ISE)\\
KTH Royal Institute of Technology\\
}
\thanks{This work is supported by the Knut and Alice Wallenberg Foundation.}
}

\maketitle




\def\x{{\mathbf x}}
\def\L{{\cal L}}


%
%

%

\begin{abstract}
Minimax risk and regret are expectation-based criteria and do not capture rare but consequential failures. To address this concern, we develop a \(\delta\)-explicit minimax-quantile theory for interactive statistical decision making (ISDM). We first provide structural relations between minimax quantiles, lower minimax quantiles, and 
minimax risk. This includes a quantile-to-expectation conversion and an equivalence between strict and lower minimax quantiles outside a countable set of confidence levels. We then derive two converse tools for ISDM: a high-probability interactive Fano's method and a high-probability interactive Le Cam's method. Then, we show that mutual-information (MI) privacy can be handled in the same framework by restricting the admissible decision class. For coordinatewise Gaussian privatization, we derive a two-point template that isolates the privacy-induced variance inflation. We instantiate this template for Gaussian mean estimation, and use the same two-point strategy directly for
two-armed Gaussian bandits. We then derive a minimax quantile lower bound for the \(K\)-armed Gaussian bandit problem, showing that the interactive Fano method captures the exploration cost over multiple possible best arms. The resulting lower bounds are explicit in the confidence level \(\delta\) and in the privacy budget for the private problems. They yield \(\log(1/\delta)/n\) scaling for squared-error Gaussian mean estimation, \(\sqrt{T\log(1/\delta)}\) scaling for two-armed bounded-mean Gaussian bandits, and \(\sqrt{KT\log(1/\delta)}\)-type scaling for the \(K\)-armed bandits, with privacy appearing through a Gaussian variance-inflation factor for the private problems.
\end{abstract}

\begin{IEEEkeywords}
Minimax quantiles, high-probability lower bounds, interactive statistical decision making, mutual-information privacy, bandits
\end{IEEEkeywords}
\section{Introduction}
\label{sec:intro}

Minimax risk and minimax regret are central performance criteria in statistics, learning theory, bandits, and reinforcement learning. They characterize the smallest worst-case expected loss achievable by any algorithm against the hardest environment, and they underpin a large body of upper and lower bound theory \cite{ma2024high,audibert2009minimax,azar2017minimax,jin2018q,lattimore2020bandit,foster2021statistical}. In interactive settings, information-theoretic converse techniques such as Le Cam's and Fano's methods play a fundamental role in identifying the statistical difficulty of sequential decision problems \cite{lecam1973convergence,polyanskiy2014lecture,cover1999elements,yu1997assouad,chen2016bayes,duchi2013distance,chen2024assouad}. However, these classical minimax criteria are expectation-based and therefore insensitive to the tail behavior of the loss distribution. This limitation is especially important in safety-critical or reliability-sensitive settings. Two algorithms may have the same expected loss while exhibiting different high-confidence behavior. For example, in the bandit problem, two policies can have comparable expected regret even though one incurs substantially larger regret on a small but practically important fraction of runs. Such phenomena are not captured by expectation alone. Quantiles provide a natural tail-sensitive alternative by identifying the smallest (ideally non-trivial) threshold $r$ such that the left-tail event of the loss, \(\{L\le r\}\), is guaranteed with probability at least $1-\delta$ \cite{ma2024high}.

In non-interactive estimation, recent work by Ma et al.\ \cite{ma2024high} developed high-probability analogues of classical minimax lower-bound techniques and derived \(\delta\)-explicit minimax quantile lower bounds. At the same time, Chen et al.\ \cite{chen2024assouad} introduced the framework of interactive statistical decision making (ISDM), which unified passive estimation and interactive problems such as bandits and reinforcement learning, and systematized information-theoretic lower bounds under interaction. Yet these two lines of work leave an important gap: the former is non-interactive, while the latter focuses primarily on expectation-based minimax risk and weak-tail high-probability guarantees. A \(\delta\)-explicit lower-bound theory for \emph{strict} minimax quantiles in ISDM has remained largely open.

A second motivation comes from privacy-constrained decision making. In many modern statistical and sequential learning problems, the algorithm cannot act directly on raw data and instead operates through a privacy mechanism. Bounded mutual-information (MI) as a measure of privacy leakage is a natural information-theoretic notion in such settings. As discussed in information-theoretic privacy mechanism design
\cite{zamani2023privacy,zamani2022bounds}, bounded mutual information
provides an average leakage constraint that can be analytically convenient for
privacy-utility tradeoff analysis. However, because mutual information controls
average leakage rather than pointwise leakage, it does not prevent individual
realizations of the released data from revealing more information than the
average budget. 

From the minimax perspective, privacy alters the set of admissible decision rules and reduces the distinguishability of models. This raises a basic question: how do privacy constraints affect the minimax quantile of an interactive decision problem? Our perspective is that privacy based on bounded MI constraints can be treated as a constrained instance of ISDM in which the decision class is restricted to private algorithms.

In this paper, we develop a \(\delta\)-explicit minimax-quantile theory for ISDM and then specialize it to problems with bounded MI constraints. First, we prove that minimax quantiles under interaction admit the same converse tools that minimax risk admits in the expectation-based literature. We therefore derive high-probability interactive counterparts of Fano's and Le Cam's methods, establish structural links between strict minimax quantiles, lower minimax quantiles, and classical minimax risk. We then show how problems with bounded mutual-information privacy constraints, which we refer to as MI-private problems, fit naturally into the same framework through privacy-constrained admissible classes. For coordinatewise Gaussian privatization, this leads to a two-point template that can be used in concrete problems.

\subsection{Contributions}
 We summarize our contributions as follows:
\begin{itemize}
    \item  In Section~II, we discuss the most relevant prior work on high-probability minimax quantiles, information-theoretic lower bounds for interactive statistical decision making, and privacy-constrained minimax analysis. We position our work relative to existing minimax-quantile theories for non-interactive estimation and recent ISDM lower-bound frameworks.
    \item In Section~III, we introduce the ISDM framework together with \(\delta\)-explicit minimax-quantile criteria. We define strict minimax quantiles and lower minimax quantiles, discuss their relation to weak-tail formulations, and establish the notation and formal setup used throughout the paper.
    \item In Section~IV, we develop \(\delta\)-explicit lower-bound tools for ISDM. We first derive structural relations between minimax quantiles and classical minimax risk, including a quantile-to-expectation conversion and an equivalence between strict and lower minimax quantiles outside a countable set of confidence levels. We then develop two converse techniques for ISDM: a high-probability interactive Fano's method and a high-probability interactive Le Cam method, both yielding explicit minimax-quantile lower bounds.
    \item In Section~V, we show that statistical decision problems with bounded mutual-information privacy constraints can be formulated as ISDM problems with privacy-constrained admissible decision classes. We further specialize to coordinatewise Gaussian privatization and derive a two-point template that isolates the effect of coordinatewise Gaussian MI privacy through a variance-inflation factor.
    \item In Section VI, we instantiate the minimax quantile theory in five examples:
classical Gaussian mean estimation, classical two-armed Gaussian bandits, a
\(K\)-armed Gaussian bandit as an application of interactive Fano, and the
coordinatewise Gaussian MI-private counterparts of mean estimation and
two-armed bandits. We derive explicit \(\delta\)-dependent minimax-quantile lower bounds in each case and compare the private and classical settings, thereby making the privacy-induced variance inflation explicit.
    \item In Section~VII, we conclude by discussing the broader implications of the proposed framework and outlining several directions for future work, including extensions to other privacy notions, sharper multi-model lower bounds, and broader classes of interactive learning problems.
\end{itemize}

The conference version regarding this paper can be found in \cite{bongole2026risk}.
Beyond the specific technical results, the paper advocates a unifying perspective: tail-sensitive minimax lower bounds for estimation, interaction, and privacy can all be developed within a common ISDM framework.

\section{Related Work}
\label{sec:related}

Our work lies at the intersection of three research directions: minimax quantile lower bounds in non-interactive estimation, information-theoretic lower bounds for interactive statistical decision problems, and privacy-constrained statistical decision making.

\subsection{High-probability minimax quantiles in estimation}

Recent work by Ma et al.\ \cite{ma2024high} develops a theory of high-probability minimax lower bounds for estimation problems. Their results provide quantile analogues of classical techniques such as Fano's method and Le Cam's method, together with structural properties of minimax quantiles and connections back to expected minimax risk. In particular, they show how lower minimax quantiles can serve as a simple tool for proving strict minimax quantile lower bounds. This direction is highly relevant to the present paper, since it demonstrates that quantile-based converse theory is both feasible and useful.

However, the framework of \cite{ma2024high} is non-interactive. The observations are generated directly from a statistical model, and the lower-bound arguments do not need to account for adaptive data collection or sequential dependence induced by the algorithm. As a result, those tools do not immediately yield minimax quantile lower bounds for bandits, reinforcement learning, or more general interactive decision processes. One of our main goals is to extend this quantile viewpoint from estimation to ISDM.

\subsection{Information-theoretic lower bounds for interactive problems}

For interactive problems, Chen et al.\ \cite{chen2024assouad} introduce the ISDM framework, which unifies passive estimation and interactive decision making within a common formal framework. Within this framework, they develop interactive versions of Assouad, Fano's and Le Cam's methods, together with the decision-estimation coefficient (DEC) framework. These results provide a powerful and unifying theory for expectation-based minimax lower bounds under interaction.

The present work is closely aligned with Chen et al. \cite{chen2024assouad}, but differs in its objective and in the type of guarantees obtained. The results in \cite{chen2024assouad} are centered on minimax expected risk and on weak-tail high-probability statements. In contrast, our focus is on \emph{strict} minimax quantiles, defined through events of the form \(\{L>r\}\), and on lower bounds that are explicit in the risk level \(\delta\). This distinction is important because weak-tail bounds of the form \(\mathbb{P}(L\ge r)\ge \delta\) do not, in general, imply strict-tail quantile bounds without additional continuity assumptions.

The DEC literature also provides an important point of comparison. In the ISDM setting, \cite{chen2024assouad} formalize a quantile-DEC tailored to weak-tail guarantees and relate it to constrained DEC quantities. These results show that weak minimax quantiles can be controlled through problem-dependent complexity measures. Nevertheless, as stated there, they do not directly yield a \(\delta\)-explicit theory for strict minimax quantiles. Our contribution is complementary: instead of working through DEC, we develop direct \(\delta\)-explicit high-probability versions of interactive Fano's and Le Cam methods and use them to obtain strict minimax quantile lower bounds.

Classical bandit lower-bound results also include high-probability arguments. For example, the monograph of Lattimore and Szepesv\'ari \cite{lattimore2020bandit} develops weak-tail lower bounds for bandits, including two-armed Gaussian instances with the \(\sqrt{T\log(1/\delta)}\) scaling up to constants. These results are highly useful for specific problems, but they do not, by themselves, provide a generic minimax quantile framework for arbitrary interactive decision problems. Our aim is to extend such results to a more general ISDM theory.

\subsection{Privacy-constrained minimax analysis}

A third relevant line of work studies how privacy constraints affect statistical
and sequential decision problems. Information-theoretic privacy notions,
including mutual-information constraints, have been used to quantify privacy
leakage in data-release and privacy-utility tradeoff problems
\cite{zamani2023privacy,zamani2022bounds,sankar2013utility,calmon2015information,diaz2019robustness}. Such notions
naturally interact with minimax lower-bound methods because privacy mechanisms
reduce the distinguishability of the induced output laws. This connection is
central in locally private minimax estimation, where privacy constraints lead to
modified data-processing inequalities and private versions of Le Cam, Fano, and
Assouad methods \cite{duchi2013local,duchi2018minimax,duchi2019lower}.

Privacy constraints have also been studied in sequential learning and bandit
problems, mainly under differential-privacy or local-differential-privacy
requirements. Existing work derives regret upper and lower bounds for private
multi-armed bandits and related partial-information problems
\cite{tossou2016algorithms,basu2019differential,ren2020multi,azize2024concentrated, chen2025decision}.
Much of this literature focuses on expected error or expected regret, often in
non-interactive or estimation-centric settings, or on standard high-probability
regret guarantees under specific privacy notions. In contrast, our interest lies
in the minimax-quantile regime: we study how privacy constraints affect the
\((1-\delta)\)-quantile of the loss in an interactive decision problem.

We model privacy within the ISDM formulation by encoding the privacy constraint
as a restriction on the class of admissible decision rules. For the bounded MI
privacy constraint, this leads to a composed class of private decision rules
whose induced output laws can be analyzed using the same ISDM machinery as in
the standard case. This perspective separates the generic minimax-quantile
arguments from the privacy-specific verification of divergence and separation
conditions.

\subsection{Positioning of the present paper}

In summary, Ma et al.\ \cite{ma2024high} provide a quantile lower-bound theory without interaction, while Chen et al.\ \cite{chen2024assouad} provide an interaction-aware lower-bound theory centered on expectation and weak-tail guarantees. Our paper addresses the gap between these two directions by developing a \(\delta\)-explicit minimax quantile theory directly in ISDM. It then extends this theory to MI-private settings by viewing privacy as a restriction of the decision class. This yields a unified path from general interactive converse tools to concrete high-probability lower bounds for  estimation and bandit problems under privacy constraints.

\section{The interactive statistical decision making framework and Quantile Criteria}
\label{sec:framework}

We work within the interactive statistical decision making (ISDM) framework of \cite{chen2024assouad}, which accommodates both passive statistical estimation and interactive problems such as bandits and reinforcement learning. An ISDM instance is specified by a quadruple
\[
(\mathcal X,\mathcal M,\mathcal D,L),
\]
where \(\mathcal X\) is the outcome alphabet, \(\mathcal M\) is the model class, \(\mathcal D\) is the class of admissible decision rules, and
\[
L:\mathcal M\times \mathcal X \to [0,\infty)
\]
is the loss function.

For each model \(M\in\mathcal M\) and each decision rule \(\ALG\in\mathcal D\), the interaction between the environment and the decision maker induces a probability law
\[
\mathbb P^{M,\ALG}\in \Delta(\mathcal X)
\]
over the final outcome \(X\in\mathcal X\). In a passive estimation problem, \(X\) may be the observed sample together with an estimator; in an interactive problem, \(X\) may represent the full transcript of actions and observations. The loss \(L(M,X)\) is evaluated under the induced law \(\mathbb P^{M,\ALG}\), and all probabilities and expectations in this paper are taken with respect to this law unless otherwise stated.

\subsection{Notation}

For convenience, we use the following notation throughout the paper. We write
\(\TV(\cdot,\cdot)\) for total variation distance, \(\KL(\cdot\|\cdot)\) for
Kullback--Leibler divergence, and \(I(\cdot;\cdot)\) for mutual information.

For a fixed model \(M\in\mathcal M\) and decision rule
\(\ALG\in\mathcal D\), we write
\[
P^{M,\ALG}\in\Delta(\mathcal X)
\]
for the law of the final outcome \(X\) induced by the interaction between the
environment \(M\) and the decision rule \(\ALG\). Thus, probabilities and
expectations of the form
\[
\mathbb P^{M,\ALG}(\cdot),
\qquad
\mathbb E^{M,\ALG}[\cdot],
\]
are taken with respect to \(X\sim P^{M,\ALG}\). Equivalently,
\(P^{M,\ALG}\) may be viewed as the conditional law of the outcome given the
model \(M\) and the decision rule \(\ALG\).

For a prior \(\mu\in\Delta(\mathcal M)\), expressions such as
\[
\mathbb P_{M\sim\mu,\;X\sim P^{M,\ALG}}(\cdot),
\qquad
\mathbb E_{M\sim\mu,\;X\sim P^{M,\ALG}}[\cdot],
\]
mean that \(M\) is first drawn from \(\mu\), and then, conditionally on
\(M\), the outcome \(X\) is drawn according to \(P^{M,\ALG}\). Similarly,
\[
\mathbb P_{M\sim\mu,\;X\sim Q}(\cdot)
\]
means that \(M\sim\mu\) and \(X\sim Q\), with \(X\) drawn according to the reference law \(Q\).

We use \(\Bern(p)\) for the Bernoulli distribution with parameter \(p\). We use
\(x_n\downarrow x\), respectively \(x_n\uparrow x\), to denote convergence to
\(x\) from above, respectively from below.

\subsection{Minimax risk and minimax quantiles}

The classical benchmark is the minimax risk \cite{polyanskiy2014lecture}.

\begin{definition}[Minimax risk]
The minimax risk is defined as
\[
\mathfrak M
:=
\inf_{\ALG\in\mathcal D}\;
\sup_{M\in\mathcal M}\;
\mathbb E^{M,\ALG}[L(M,X)].
\]
\end{definition}

While \(\mathfrak M\) captures worst-case performance in expectation, it does not distinguish between algorithms with similar expected loss but substantially different tail behavior. To study high-confidence performance, we use minimax quantiles.

\begin{definition}[Quantile]
For any \(\delta\in(0,1]\), the \((1-\delta)\)-quantile of the loss under \((M,\ALG)\) is
\[
\Quantile(1-\delta,\mathbb P^{M,\ALG},L)
:=
\inf\Bigl\{
r\in[0,\infty]:
\mathbb P^{M,\ALG}\bigl(L(M,X)>r\bigr)\le \delta
\Bigr\}.
\]
\end{definition}

\begin{definition}[Strict minimax quantile]
For any \(\delta\in(0,1]\), the strict minimax quantile is
\[
\mathfrak M(\delta)
:=
\inf_{\ALG\in\mathcal D}\;
\sup_{M\in\mathcal M}\;
\Quantile(1-\delta,\mathbb P^{M,\ALG},L).
\]
\end{definition}

Following \cite{ma2024high}, we also consider the lower minimax quantile, which is often easier to lower bound directly.

\begin{definition}[Lower minimax quantile]
For \(\delta\in(0,1]\), the lower minimax quantile is
\[
\mathfrak M_{-}(\delta)
:=
\inf\Bigl\{
r\in[0,\infty]:
\inf_{\ALG\in\mathcal D}\;
\sup_{M\in\mathcal M}\;
\mathbb P^{M,\ALG}\bigl(L(M,X)>r\bigr)\le \delta
\Bigr\}.
\]
\end{definition}

The quantity \(\mathfrak M(\delta)\) finds the smallest threshold that an algorithm can guarantee with confidence \(1-\delta\), over the model class, while \(\mathfrak M_{-}(\delta)\) captures the tail-probability formulation that will be more convenient for our converse results.

\subsection{Strict and weak tail conventions}

Since several prior high-probability lower bounds in interactive learning are expressed using weak-tail events of the form \(\{L\ge r\}\), we make our convention explicit. Throughout the paper, our primary object is the \emph{strict} quantile defined through the event \(\{L>r\}\). This choice is deliberate as it yields a correspondence with the lower minimax quantiles and leads to lower bounds on the entire strict quantile curve \(\delta\mapsto \mathfrak M(\delta)\).

For comparison, one may also define the weak quantile by
\[
\WQuantile(1-\delta,\mathbb P^{M,\ALG},L)
:=
\inf\Bigl\{
r\in[0,\infty]:
\mathbb P^{M,\ALG}\bigl(L(M,X)\ge r\bigr)\le \delta
\Bigr\},
\]
and the corresponding minimax weak quantile
\[
\mathfrak M_W(\delta)
:=
\inf_{\ALG\in\mathcal D}\;
\sup_{M\in\mathcal M}\;
\WQuantile(1-\delta,\mathbb P^{M,\ALG},L).
\]
In general, weak-tail lower bounds do not imply strict-tail lower bounds without additional regularity assumptions, for example when \(\mathbb P(L=r)>0\). For this reason, our main results are stated directly for strict minimax quantiles and lower minimax quantiles.

The next section develops \(\delta\)-explicit lower-bound tools for \(\mathfrak M(\delta)\) and \(\mathfrak M_{-}(\delta)\) in ISDM. Those results will later be specialized to privacy-constrained admissible classes, including MI-private decision rules.

\section{Minimax Quantile Lower Bounds in ISDM}
\label{sec:main-results}

This section develops \(\delta\)-explicit lower-bound tools for minimax quantiles in ISDM. We proceed in three steps. First, we connect minimax quantiles to the classical minimax risk and relate strict minimax quantiles to lower minimax quantiles. Second, we develop a high-probability interactive Fano's method. Third, we develop a high-probability interactive Le Cam's method. Together, these results provide a converse toolkit for proving tail-sensitive lower bounds under interaction.

\subsection{Bridges between expectation, tails, and quantiles}

We first show that lower bounds on minimax quantiles imply corresponding
lower bounds on the minimax risk. In this sense, the quantile-based
criterion provides a tail-sensitive refinement of the expectation-based minimax
criterion.

\begin{theorem}[Quantile-to-expectation conversion in ISDM]
\label{thm:quantile-to-expectation}
For every \(\delta\in(0,1]\),
\[
\mathfrak M
=
\inf_{\ALG\in\mathcal D}\;
\sup_{M\in\mathcal M}\;
\mathbb E^{M,\ALG}[L(M,X)]
\;\ge\;
\delta\,\mathfrak M(\delta).
\]
\end{theorem}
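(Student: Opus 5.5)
The plan is a Markov-type argument applied pointwise for each pair $(M,\ALG)$, followed by taking $\sup_M$ and then $\inf_{\ALG}$. Fix $\delta\in(0,1]$, an algorithm $\ALG\in\mathcal D$ and a model $M\in\mathcal M$, and write $q:=\Quantile(1-\delta,\mathbb P^{M,\ALG},L)$. The first step is the pointwise inequality
\[
\mathbb E^{M,\ALG}[L(M,X)]\;\ge\;\delta\, q .
\]
If $q=0$ there is nothing to prove. Otherwise, take any $r$ with $0\le r<q$. By the definition of the quantile as an infimum, $r$ does not belong to the set $\{r':\mathbb P^{M,\ALG}(L>r')\le\delta\}$. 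Hence $\mathbb P^{M,\ALG}(L(M,X)>r)>\delta$. Since $L\ge 0$, Markov's inequality (or simply $L\ge r\,\mathbf 1\{L>r\}$) gives
\[
\mathbb E^{M,\ALG}[L]\;\ge\; r\,\mathbb P^{M,\ALG}(L>r)\;\ge\; r\delta .
\]
Letting $r\uparrow q$ yields the claim. If $q=\infty$, the same argument with $r\to\infty$ shows $\mathbb E^{M,\ALG}[L]=\infty$, so the inequality still holds with the convention $\delta\cdot\infty=\infty$.

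The second step takes the supremum over $M$ on both sides, which gives
\[
\sup_M \mathbb E^{M,\ALG}[L]\;\ge\;\delta\,\sup_M \Quantile(1-\delta,\mathbb P^{M,\ALG},L).
\]
This uses only that multiplication by the positive constant $\delta$ commutes with the supremum. Next, take the infimum over $\ALG\in\mathcal D$. The right-hand side is at least $\delta\,\inf_{\ALG}\sup_M\Quantile(\cdot)=\delta\,\mathfrak M(\delta)$, again because scaling by $\delta>0$ commutes with the infimum. The left-hand side becomes $\mathfrak M$, which proves the theorem.

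There is no real obstacle here. The only points needing care are the strict-versus-weak inequality at the quantile and the infinite cases. Using $r<q$ rather than $r=q$ avoids any issue with atoms of the loss at the quantile, because for $r<q$ the strict bound $\mathbb P(L>r)>\delta$ holds directly from the infimum definition. The infinite cases are handled by the $[0,\infty]$ convention already built into the definitions.
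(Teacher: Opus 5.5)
Your proof is correct and follows the paper's argument: a pointwise Markov-type bound $\mathbb E^{M,\ALG}[L(M,X)]\ge\delta\,\Quantile(1-\delta,\mathbb P^{M,\ALG},L)$, followed by taking $\sup_M$ and then $\inf_{\ALG}$. The only cosmetic difference is where the limit is taken. The paper first lets $s\uparrow q$ inside the probability to obtain $\mathbb P(L\ge q)\ge\delta$ and then applies Markov at $q$ itself. You apply the bound at each $r<q$ and let $r\uparrow q$ in $r\delta$, which avoids the continuity-of-measure step.
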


\begin{proof}[Proof sketch] The proof adapts the argument of \cite[Proposition~2]{ma2024high} to the ISDM
setting by working with the induced laws \(\mathbb{P}^{M,\ALG}\) associated with
interactive decision rules. The full details are given in the appendix.
\end{proof}

The next result shows that lower minimax quantiles are essentially equivalent to strict minimax quantiles. This is important because lower minimax quantiles are often easier to control directly through tail-probability arguments.

\begin{theorem}[Lower minimax quantile relation in ISDM]
\label{thm:lower-minimax-quantile-relation}
For every \(\delta\in(0,1]\) and every \(\xi\in(0,\delta)\),
\[
\mathfrak M_{-}(\delta)
\;\le\;
\mathfrak M(\delta)
\;\le\;
\mathfrak M_{-}(\delta-\xi).
\]
Consequently,
\[
\mathfrak M(\delta)=\mathfrak M_{-}(\delta)
\]
for all \(\delta\in(0,1]\) except a countable set. Moreover, if for some \(r\ge 0\),
\[
\inf_{\ALG\in\mathcal D}\;
\sup_{M\in\mathcal M}\;
\mathbb P^{M,\ALG}\bigl(L(M,X)>r\bigr)
>
\delta,
\]
then
\[
\mathfrak M(\delta)\ge \mathfrak M_{-}(\delta)\ge r.
\]
\end{theorem}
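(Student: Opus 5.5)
The plan is to argue directly from the definitions, working with the function
\[
\phi(r):=\inf_{\ALG\in\mathcal D}\sup_{M\in\mathcal M}\mathbb P^{M,\ALG}\bigl(L(M,X)>r\bigr).
\]
By definition, \(\mathfrak M_{-}(\delta)=\inf\{r:\phi(r)\le\delta\}\). Since \(\{L>r\}\) shrinks as \(r\) grows, \(\phi\) is nonincreasing in \(r\). Moreover, \(\mathfrak M_{-}(\delta)\) is nonincreasing in \(\delta\).

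\textbf{Left inequality.} To show \(\mathfrak M_{-}(\delta)\le\mathfrak M(\delta)\), take any \(r>\mathfrak M(\delta)\) (there is nothing to prove if \(\mathfrak M(\delta)=\infty\)). Choose \(\ALG\) with \(\sup_M \Quantile(1-\delta,\mathbb P^{M,\ALG},L)<r\).

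For each \(M\) there is then some \(r_M<r\) with \(\mathbb P^{M,\ALG}(L>r_M)\le\delta\). Monotonicity gives \(\mathbb P^{M,\ALG}(L>r)\le\delta\) for every \(M\). Hence \(\phi(r)\le\delta\), so \(\mathfrak M_{-}(\delta)\le r\). Letting \(r\downarrow\mathfrak M(\delta)\) gives the claim.

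\textbf{Right inequality.} To show \(\mathfrak M(\delta)\le\mathfrak M_{-}(\delta-\xi)\), take \(r>\mathfrak M_{-}(\delta-\xi)\). There is then \(r'\le r\) with \(\phi(r')\le\delta-\xi\), and monotonicity gives \(\phi(r)\le\delta-\xi<\delta\).

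This is the step where the slack \(\xi\) matters. Because \(\phi(r)\) is an infimum over \(\ALG\) that may not be attained, we can only find \(\ALG\) with \(\sup_M\mathbb P^{M,\ALG}(L>r)\le\delta-\xi+\xi/2<\delta\). For this \(\ALG\), \(\Quantile(1-\delta,\mathbb P^{M,\ALG},L)\le r\) for every \(M\). Thus \(\mathfrak M(\delta)\le r\), and letting \(r\downarrow\mathfrak M_{-}(\delta-\xi)\) finishes.

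\textbf{Equality off a countable set.} The map \(g(\delta):=\mathfrak M_{-}(\delta)\) is monotone nonincreasing on \((0,1]\), with values in \([0,\infty]\). Composing with an order-preserving homeomorphism \([0,\infty]\to[0,1]\), such as \(t\mapsto t/(1+t)\), we may treat it as a bounded monotone function.

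Such a function has at most countably many discontinuities. At any continuity point \(\delta\), letting \(\xi\downarrow0\) in the sandwich gives
\[
\mathfrak M_{-}(\delta)\le\mathfrak M(\delta)\le\lim_{\xi\downarrow0}\mathfrak M_{-}(\delta-\xi)=\mathfrak M_{-}(\delta).
\]
Points where \(g\) takes the value \(\infty\) cause no trouble, since equality is interpreted in the extended reals and the transformed function is still monotone. The key observation is that only left-continuity is needed, and left-discontinuities of a monotone function are countable.

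\textbf{Final claim.} Suppose \(\phi(r)>\delta\). For any \(r'<r\), monotonicity gives \(\phi(r')\ge\phi(r)>\delta\). So no \(r'<r\), and by hypothesis not \(r\) itself, lies in the set \(\{r:\phi(r)\le\delta\}\). Hence \(\mathfrak M_{-}(\delta)\ge r\), and the left inequality yields \(\mathfrak M(\delta)\ge r\).

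The main obstacle is the non-attainment of the infimum over \(\ALG\) in the right inequality. This is exactly what forces the strict gap \(\xi\), and it is the only reason equality holds merely outside a countable set rather than everywhere.
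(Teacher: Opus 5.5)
Your proof is correct and follows essentially the same route as the paper's: the same two sandwich inequalities, obtained by taking $r$ just above the relevant infimum, then the countability of discontinuities of the monotone map $\delta\mapsto\mathfrak M_-(\delta)$, then the definition-chasing for the final claim. You are slightly more careful than the paper in two places. In the right inequality you extract an $\ALG$ with $\sup_M\mathbb P^{M,\ALG}(L(M,X)>r)<\delta$, whereas the paper asserts one with $\le\delta-\xi$, which need not exist if the infimum over $\ALG$ is not attained. You also explicitly handle the possibility that $\mathfrak M_-$ takes the value $\infty$.
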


\begin{proof}[Proof sketch] The proof adapts the argument of \cite[Theorem 4]{ma2024high} to the ISDM
setting by working with the induced laws \(\mathbb{P}^{M,\ALG}\) associated with
interactive decision rules. For the full proof, we refer the reader to the appendix.
\end{proof}

\begin{remark}
    Theorem~\ref{thm:lower-minimax-quantile-relation} provides a convenient
route for deriving strict minimax-quantile lower bounds. In many lower-bound arguments one first proves a statement of the form
\[
\inf_{\ALG\in\mathcal D}\sup_{M\in\mathcal M}\mathbb P^{M,\ALG}(L(M,X)>r)>\delta,
\]
and then converts it into a strict minimax quantile lower bound using the last part of the theorem.
\end{remark}

\subsection{A high-probability interactive Fano's method}

We now derive an interactive Fano-type lower bound for lower minimax quantiles. The result is stated for general \(f\)-divergences and yields a \(\delta\)-explicit lower bound that is uniform over algorithms.

\begin{theorem}[High-probability interactive Fano]
\label{thm:hp-interactive-fano}
Fix an \(f\)-divergence \(D_f\), a prior \(\mu\in\Delta(\mathcal M)\), and a
threshold \(\Delta>0\). For each \(\ALG\in\mathcal D\), let
\(Q_\ALG\in\Delta(\mathcal X)\) be a reference distribution. Define
\[
\bar\rho_{\Delta,Q_\ALG}
:=
\mathbb P_{M\sim\mu,\;X\sim Q_\ALG}
\bigl(L(M,X)\le \Delta\bigr),
\]
and
\[
d_{f,\epsilon}(p)
:=
\begin{cases}
D_f\bigl(\Bern(1-\epsilon)\,\|\,\Bern(p)\bigr),
& p\le 1-\epsilon,\\[0.75ex]
0, & p>1-\epsilon.
\end{cases}
\]
Let
\[
\epsilon^\star
:=
\sup_{\substack{\{Q_\ALG\}_{\ALG\in\mathcal D}\\ \epsilon\in[0,1]}}
\left\{
\epsilon:
\sup_{\ALG\in\mathcal D}
\left[
\mathbb E_{M\sim\mu}
\bigl[
D_f(\mathbb P^{M,\ALG}\|Q_\ALG)
\bigr]
-
d_{f,\epsilon}(\bar\rho_{\Delta,Q_\ALG})
\right]
<0
\right\}.
\]
Then, for every \(\delta\in[0,\epsilon^\star)\), we have
\[
\mathfrak M_{-}(\delta)\ge \Delta.
\]
\end{theorem}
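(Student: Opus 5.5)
The plan is to argue by contradiction, after reducing from the minimax tail probability to a Bayesian one. Fix \(\delta\in[0,\epsilon^\star)\) and suppose \(\mathfrak M_{-}(\delta)<\Delta\). By the definition of \(\mathfrak M_{-}\), there is some \(r<\Delta\) and some \(\ALG\in\mathcal D\) such that \(\sup_{M}\mathbb P^{M,\ALG}(L(M,X)>r)\) is at most \(\delta+\eta\) for arbitrarily small \(\eta>0\). Because \(r<\Delta\) and the event \(\{L\le r\}\) is contained in \(\{L\le\Delta\}\), this gives
\[
\mathbb P^{M,\ALG}(L(M,X)\le \Delta)\ge 1-\delta-\eta \quad\text{for all } M.
\]
Averaging over the prior, the Bayesian success probability \(\rho:=\mathbb P_{M\sim\mu,X\sim P^{M,\ALG}}(L(M,X)\le\Delta)\) satisfies \(\rho\ge 1-\delta-\eta\). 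Since \(\delta<\epsilon^\star\), we can choose \(\epsilon\) with \(\delta+\eta\le\epsilon<\epsilon^\star\) and a family \(\{Q_\ALG\}\) at which the defining strict inequality for \(\epsilon^\star\) holds. Here we use that the feasible set is monotone in \(\epsilon\): \(d_{f,\epsilon}(p)\) is nonincreasing in \(\epsilon\), because \(\Bern(1-\epsilon)\) moves toward \(\Bern(p)\). For this pair we have \(\rho\ge 1-\epsilon\) and
\[
\mathbb E_\mu[D_f(P^{M,\ALG}\|Q_\ALG)]<d_{f,\epsilon}(\bar\rho_{\Delta,Q_\ALG}).
\]

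The second step is a data-processing inequality on the joint laws. Consider the joint distributions \(\mu\otimes P^{M,\ALG}\) and \(\mu\otimes Q_\ALG\) of \((M,X)\), and push both forward through the indicator \(\mathbf 1\{L(M,X)\le\Delta\}\). Joint convexity, or equivalently the chain rule for \(f\)-divergences with a common first marginal, gives
\[
\mathbb E_\mu D_f(P^{M,\ALG}\|Q_\ALG)=D_f(\mu\otimes P^{M,\ALG}\|\mu\otimes Q_\ALG)\ge D_f(\Bern(\rho)\|\Bern(\bar\rho)),
\]
where \(\bar\rho=\bar\rho_{\Delta,Q_\ALG}\).

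The final step compares this with \(d_{f,\epsilon}\). If \(\bar\rho>1-\epsilon\), then \(d_{f,\epsilon}(\bar\rho)=0\), which contradicts the strict inequality above since divergences are nonnegative. So assume \(\bar\rho\le 1-\epsilon\le\rho\). The map \(q\mapsto D_f(\Bern(q)\|\Bern(p))\) is convex and vanishes at \(q=p\), so it is nondecreasing for \(q\ge p\). Hence
\[
D_f(\Bern(\rho)\|\Bern(\bar\rho))\ge D_f(\Bern(1-\epsilon)\|\Bern(\bar\rho))=d_{f,\epsilon}(\bar\rho).
\]
This contradicts \(\mathbb E_\mu D_f<d_{f,\epsilon}(\bar\rho)\), so \(\mathfrak M_{-}(\delta)\ge\Delta\).

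I expect the main obstacle to be the bookkeeping around the infimum in \(\mathfrak M_{-}\) and the supremum in \(\epsilon^\star\). This means handling the slack \(\eta\), justifying that feasibility at a larger \(\epsilon\) transfers to a smaller one, and covering the edge cases \(\delta=0\) and \(\bar\rho\in\{0,1\}\), where \(D_f\) may be infinite. Since \(\delta<\epsilon^\star\) strictly, I will pick \(\epsilon\in(\delta,\epsilon^\star)\) feasible and then \(\eta<\epsilon-\delta\), which avoids needing any continuity.
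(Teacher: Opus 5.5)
Your proposal is correct and is essentially the paper's proof. It uses the same data-processing step on the joint laws \(\mu\otimes P^{M,\ALG}\) and \(\mu\otimes Q_\ALG\) pushed through \(\mathbf 1\{L(M,X)\le\Delta\}\), the same observation that \(d_{f,\epsilon}(\bar\rho)=0\) when \(\bar\rho>1-\epsilon\), and the same monotonicity of \(q\mapsto D_f(\Bern(q)\|\Bern(\bar\rho))\) for \(q\ge\bar\rho\), which you derive from convexity rather than citing it. The only difference is packaging: the paper shows that every \(\ALG\) has Bayesian failure probability exceeding \(\epsilon>\delta\) and then invokes the last part of Theorem~\ref{thm:lower-minimax-quantile-relation}, whereas you argue by contradiction directly from the definition of \(\mathfrak M_-(\delta)\), which amounts to the same thing.
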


\begin{proof}[Proof sketch]
The proof adapts the interactive Fano method of
\cite[Thm.~2]{chen2024assouad} to the success event
\[
\{L(M,X)\le \Delta\}
\]
rather than the event \(\{L(M,X)<\Delta\}\). For any fixed algorithm
\(\ALG\), reference law \(Q_\ALG\), and level \(\epsilon\), the Fano argument
implies that if
\[
\mathbb E_{M\sim\mu}
\bigl[
D_f(\mathbb P^{M,\ALG}\|Q_\ALG)
\bigr]
<
d_{f,\epsilon}(\bar\rho_{\Delta,Q_\ALG}),
\]
then
\[
\mathbb P_{M\sim\mu,\;X\sim \mathbb P^{M,\ALG}}
\bigl(L(M,X)>\Delta\bigr)
\ge \epsilon.
\]
By the definition of \(\epsilon^\star\), for every
\(\epsilon<\epsilon^\star\) one can choose a reference family
\(\{Q_\ALG\}_{\ALG\in\mathcal D}\) such that this condition holds uniformly
over all algorithms. Hence,
\[
\inf_{\ALG\in\mathcal D}
\mathbb P_{M\sim\mu,\;X\sim \mathbb P^{M,\ALG}}
\bigl(L(M,X)>\Delta\bigr)
\ge \epsilon.
\]
Since the outer probability averages over the prior \(\mu\), this implies
\[
\inf_{\ALG\in\mathcal D}
\sup_{M\in\mathcal M}
\mathbb P^{M,\ALG}(L(M,X)>\Delta)
\ge \epsilon.
\]
The conclusion \(\mathfrak M_{-}(\delta)\ge \Delta\) for all
\(\delta<\epsilon^\star\) follows from
Theorem~\ref{thm:lower-minimax-quantile-relation}.
\end{proof}

\begin{remark}
 Theorem~\ref{thm:hp-interactive-fano} yields a lower-bound curve in \(\delta\), rather than a bound at a single confidence level. It also proposes a \emph{strict-tail} statement involving \(\mathbb P(L>\Delta)\), which is the form needed for our minimax quantile analysis.
\end{remark}

\subsection{A high-probability interactive Le Cam's method}

We next derive a two-point lower-bound method for lower minimax quantiles. 

\begin{theorem}[High-probability interactive Le Cam]
\label{thm:hp-lecam}
Let \(\delta\in(0,\tfrac12)\), and let \(M_1,M_2\in\mathcal M\). Suppose that the loss satisfies the uniform separation condition
\[
L(M_1,x)+L(M_2,x)\ge 2\Delta
\qquad
\text{for all }x\in\mathcal X.
\]
Then the following hold.
\begin{enumerate}[(a)]
    \item If 
    $\sup_{\ALG\in\mathcal D}
    \TV(\mathbb P^{M_1,\ALG},\mathbb P^{M_2,\ALG})<1-2\delta,
    $
    then
    \[
    \mathfrak M_{-}(\delta)\ge \Delta.
    \]

    \item If 
    $\sup_{\ALG\in\mathcal D}
    \KL(\mathbb P^{M_1,\ALG}\|\mathbb P^{M_2,\ALG})
    <
    \log\!\left(\frac{1}{4\delta(1-\delta)}\right),
    $
    then
    \[
    \mathfrak M_{-}(\delta)\ge \Delta.
    \]
\end{enumerate}
\end{theorem}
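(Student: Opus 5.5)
The plan is to prove a strict-tail two-point testing bound for every fixed algorithm and then pass to \(\mathfrak M_{-}(\delta)\) through the last part of Theorem~\ref{thm:lower-minimax-quantile-relation}. Fix \(\ALG\in\mathcal D\) and write \(P_1=\mathbb P^{M_1,\ALG}\), \(P_2=\mathbb P^{M_2,\ALG}\). Consider the events
\[
A_1=\{x: L(M_1,x)\le \Delta\},\qquad A_2=\{x: L(M_2,x)\le\Delta\}.
\]
The separation condition alone does not make \(A_1\) and \(A_2\) disjoint, since on \(A_1\cap A_2\) both losses may equal \(\Delta\) exactly. I will therefore work at a slightly smaller threshold \(r<\Delta\). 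Set \(A_i^r=\{L(M_i,x)\le r\}\). If \(x\in A_1^r\cap A_2^r\), then \(L(M_1,x)+L(M_2,x)\le 2r<2\Delta\), which contradicts the separation condition. Hence \(A_1^r\subseteq (A_2^r)^c=\{L(M_2,X)>r\}\).

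For part (a), it follows that
\[
P_1(L(M_1,X)>r)+P_2(L(M_2,X)>r)\ge P_1((A_1^r)^c)+P_2(A_1^r) = 1-\bigl(P_1(A_1^r)-P_2(A_1^r)\bigr)\ge 1-\TV(P_1,P_2).
\]
So \(\max_i P_i(L(M_i,X)>r)\ge \tfrac12(1-\TV(P_1,P_2))\). Let \(\tau:=\sup_{\ALG}\TV<1-2\delta\). Taking the infimum over algorithms gives
\[
\inf_{\ALG}\sup_{M}\mathbb P^{M,\ALG}(L>r)\ge \tfrac12(1-\tau)>\delta.
\]
The strictness here comes from the strict hypothesis combined with the supremum over \(\ALG\): the bound \(\tfrac12(1-\tau)\) is uniform, so the infimum over algorithms stays strictly above \(\delta\). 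Theorem~\ref{thm:lower-minimax-quantile-relation} then gives \(\mathfrak M_{-}(\delta)\ge r\) for every \(r<\Delta\). Letting \(r\uparrow\Delta\) yields \(\mathfrak M_{-}(\delta)\ge\Delta\). As an alternative, one can check directly that \(r\mapsto\) the relevant tail probabilities is monotone, so the limit is valid.

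For part (b), I reduce to a two-point bound that works with KL. Let \(p=P_1(A_1^r)\) and \(q=P_2(A_1^r)\), so that the error probabilities are \(e_1=1-p\) and \(e_2\ge q\). Suppose, for contradiction, that both \(e_1\le\delta\) and \(e_2\le\delta\). Then \(p\ge 1-\delta\) and \(q\le\delta\). By data processing for KL applied to the indicator of \(A_1^r\),
\[
\KL(P_1\|P_2)\ge \mathrm{kl}(p\|q)\ge \mathrm{kl}(1-\delta\|\delta)=(1-2\delta)\log\tfrac{1-\delta}{\delta},
\]
where the second inequality uses monotonicity of binary KL away from the diagonal, valid because \(\delta<\tfrac12\). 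I then need the elementary inequality
\[
\mathrm{kl}(1-\delta\|\delta)\ge \log\frac{1}{4\delta(1-\delta)}.
\]
This is the Bretagnolle--Huber-type step, and I expect it to be the main obstacle. The cleanest route avoids the binary-KL comparison and uses Bretagnolle--Huber directly:
\[
e_1+e_2\ge P_1(B^c)+P_2(B)\ge \tfrac12 e^{-\KL(P_1\|P_2)}.
\]
This bound is too weak to reach the constant \(4\delta(1-\delta)\). Instead I will use the sharper Bhattacharyya form. For any event \(B\),
\[
\bigl(\sqrt{P_1(B^c)P_2(B^c)}+\sqrt{P_1(B)P_2(B)}\bigr)^2\ge e^{-\KL}
\]
by data processing for the Bhattacharyya coefficient together with Jensen's inequality, \(BC\ge e^{-\KL/2}\). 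If \(P_1(B^c)\le\delta\) and \(P_2(B)\le\delta\), then the left side is at most \(\bigl(\sqrt{\delta(1-\delta)}+\sqrt{(1-\delta)\delta}\bigr)^2=4\delta(1-\delta)\). Here I use that \(\sqrt{a(1-b)}+\sqrt{(1-a)b}\) is increasing in \(a,b\) on \([0,\tfrac12]\). This would force \(\KL\ge\log\frac{1}{4\delta(1-\delta)}\). Since the supremum of \(\KL\) over algorithms is strictly smaller, there is a uniform gap. As in part (a), this gives \(\inf_{\ALG}\sup_M P(L>r)>\delta\), and I conclude by Theorem~\ref{thm:lower-minimax-quantile-relation} and by letting \(r\uparrow\Delta\).
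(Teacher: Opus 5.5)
Your proposal is correct, and part (a) matches the paper. The paper also works at a threshold strictly below $\Delta$ (namely $a\Delta$ with $a\in(0,1)$) and shows $\inf_{\ALG}\max_j\mathbb P^{M_j,\ALG}(L(M_j,X)>a\Delta)\ge\frac12(1-\sup_{\ALG}\TV)>\delta$. It then applies the last part of Theorem~\ref{thm:lower-minimax-quantile-relation} and lets $a\uparrow 1$. The only difference is that it cites the interactive two-point Le Cam inequality of \cite{chen2024assouad} for the indicator loss $\mathbf 1\{t>a\Delta\}$, whereas you prove that inequality directly from $A_1^r\subseteq\{L(M_2,\cdot)>r\}$.

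Part (b) takes a different route. The paper invokes Bretagnolle--Huber in its sharp form $\TV\le\sqrt{1-e^{-\KL}}$. The hypothesis then gives $\sup_{\ALG}\TV<\sqrt{1-4\delta(1-\delta)}=1-2\delta$, and part (a) finishes, with the uniform strict gap inherited from (a). Your Bhattacharyya computation proves exactly that sharp inequality, restricted to the test $A_1^r$: $\mathrm{BC}\ge e^{-\KL/2}$ by Jensen, data processing onto the test, and monotonicity of $\sqrt{a(1-b)}+\sqrt{(1-a)b}$ on $a+b\le 1$. It is correct and self-contained, at the cost of length and of an extra uniformity step.

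That uniformity step, which you only assert, deserves one line. Set $\kappa:=\sup_{\ALG}\KL$ and $\delta':=\tfrac12\bigl(1-\sqrt{1-e^{-\kappa}}\bigr)$, so that $4\delta'(1-\delta')=e^{-\kappa}$ and, by hypothesis, $\delta'>\delta$. Suppose some algorithm had $\max_i e_i<\delta'\le\tfrac12$. Your bound would then give $e^{-\KL}<4\delta'(1-\delta')=e^{-\kappa}$, contradicting $\KL\le\kappa$. Hence $\inf_{\ALG}\sup_M\mathbb P^{M,\ALG}(L(M,X)>r)\ge\delta'>\delta$.

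Finally, the binary-KL route you abandoned was not really an obstacle. The needed inequality $\mathrm{kl}(1-\delta\|\delta)\ge\log\frac{1}{4\delta(1-\delta)}$ is just the same sharp Bretagnolle--Huber inequality applied to $\Bern(1-\delta)$ and $\Bern(\delta)$, whose total variation distance is $1-2\delta$.
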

\begin{proof}[Proof] For the full proof, we refer the reader to the appendix.
\end{proof}

\begin{remark}
    Theorem~\ref{thm:hp-lecam} will be particularly useful later for applications. In MI-private problems, privacy mechanisms reduce distinguishability between models, and this often yields an upper bound on
\(
\KL(\mathbb P^{M_1,\ALG}\|\mathbb P^{M_2,\ALG})
\)
or
\(
\TV(\mathbb P^{M_1,\ALG},\mathbb P^{M_2,\ALG}),
\)
while the loss satisfies a two-point separation condition.
\end{remark}

\subsection{Discussion}

Theorems~\ref{thm:quantile-to-expectation}, \ref{thm:lower-minimax-quantile-relation}, \ref{thm:hp-interactive-fano}, and \ref{thm:hp-lecam} together provide a converse toolkit for minimax quantiles in ISDM. The quantile-to-expectation conversion shows that quantile lower bounds imply expectation lower bounds. The lower-minimax-quantile relation shows that tail lower bounds suffice to control strict minimax quantiles. The interactive Fano method provides prior-based lower bounds that are well
suited to capture hardness arising from large model classes. In contrast, the
interactive Le Cam method offers a two-point reduction that is often more direct
to verify in concrete applications.

In the next section, we show that this theory transfers directly to privacy-constrained interactive decision problems once privacy is modeled as a restriction of the admissible decision class.

\section{MI Privacy and Gaussian-Privatization}
\label{sec:mi-private}

We now specialize the ISDM quantile theory to statistical decision problems
subject to bounded mutual-information (MI) privacy constraints. The privacy
setting considered in this section is naturally interpreted as involving two
distinct roles. The first role is a data holder, or privacy mechanism, that has
access to the raw sensitive data. The second role is a learner or decision
maker that does not observe the raw data directly, but only receives a
privatized output generated from it. This separation is common in applications where useful statistical decisions
must be made from sensitive data. For example, a hospital may have blood-test
results from a group of patients and may want an external research team to
estimate the average effect of a new medicine. The hospital may not release
each patient's raw test result directly. Instead, it can add random noise to the
test results and release only the privatized measurements. The research team
then computes an estimate using only these privatized measurements. In this
case, the raw test-result vector is the sensitive object, while the privatized
measurements and the resulting estimate form the released output. A similar separation appears in interactive decision problems.
In a privacy-preserving implementation, the raw data may not be revealed
directly to the learning policy. Instead, the platform or data holder first
passes the response through a privacy mechanism, and the learner observes only the
privatized data. The learner then uses the privatized history to decide
which action to play in future rounds. Thus, the raw data  sequence can be the sensitive object, while the released transcript consists of the privatized responses.

The privacy mechanism may sit between the data-generating
process and the learner. In non-interactive estimation \cite{ma2024high}, the learner receives a privatized
sample instead of the raw sample. In interactive problems such as bandits \cite{lattimore2020bandit}, the learner receives privatized
feedback after each action instead of the raw reward. Thus, the learner's
admissible decision rules are constrained by the information made available
through the privacy mechanism.

In the MI privacy formulation \cite{lopuhaa2020privacy,zamani2023privacy,zamani2022bounds,rassouli2021perfect,asoodeh2016information, basciftci2016privacy, shkel2020secrecy,du2017principal,zamani2024information,zamani2025variable,sankar2013utility, makhdoumi2014information}, information leakage is measured by the mutual
information between the sensitive object \(S\) and the released output \(Y\).
The constraint
\[
I(S;Y)\le \varepsilon
\]
requires that, on average, the released output \(Y\) reduces uncertainty about
the sensitive object \(S\) by at most \(\varepsilon\). Smaller values of
\(\varepsilon\) therefore correspond to stronger privacy requirements, while
larger values permit more information about the sensitive object to be made
available for learning.

Equivalently, one may view the privacy constraint through an adversarial
interpretation. An adversary observes the released output \(Y\) and attempts to
infer information about the sensitive object \(S\). The adversary may know the
model class, the learning algorithm, and the privacy mechanism, but does not
directly observe the raw data. In the estimation example, the adversary may try
to infer patient-level test results from the privatized measurements or from
the released estimate. In the advertising example, the adversary may try to
infer a user's private interests or preferences from the sequence of displayed
advertisements and privatized responses. The role of the privacy mechanism is
to ensure that this information gain, quantified by mutual information, remains
bounded.

This MI criterion controls average information leakage. It is therefore weaker
than pointwise privacy notions such as differential privacy, which impose
worst-case guarantees under neighboring datasets \cite{lopuhaa2020privacy}. Nevertheless, MI privacy is a
natural information-theoretic privacy constraint for minimax analysis. It
directly limits how distinguishable sensitive data-generating mechanisms can
become after privatization, and this distinguishability is precisely the
quantity that appears in information-theoretic lower bounds. 

From the perspective of ISDM, an MI privacy constraint changes the admissible
decision rules. A private estimator is constrained to act on privatized samples
rather than on raw samples. A private bandit policy is constrained to act on
privatized feedback histories rather than on raw reward histories. Therefore,
privacy affects performance in two related ways: it restricts the set of
feasible decision rules, and it changes the induced distribution of the output
on which the loss is evaluated. The key point of this section is that such
privacy-constrained problems do not require a new minimax framework. They can be
treated as ISDM instances with a constrained admissible decision class. Once this
constrained class is identified, the general lower-bound results of
Section~\ref{sec:main-results} apply directly. The framework in our work can also be extended to other privacy measures.

\subsection{MI Privacy as a privacy-constrained admissible class}

Let \(\mathcal M\) be a model class, let \(\mathcal D\) be a class of base
interactive decision rules, and let
\[
(S,X)\sim P^{M,\ALG}_{S,X}\in\Delta(\mathcal S\times\mathcal X)
\]
denote the joint law of a sensitive object \(S\) and a pre-release transcript
\(X\) under model \(M\in\mathcal M\) and algorithm \(\ALG\in\mathcal D\).
Here, \(S\) represents the object to be protected, whereas \(X\) denotes the
information available before privatization. Depending on the problem, \(X\) may
be a sample, an estimator based on the sample, a collection of rewards, or a
full interactive transcript. Here, we assume \(S\) and \(X\) are arbitrarily
correlated and drawn from the joint distribution \(P^{M,\ALG}_{S,X}\).

A privacy mechanism is a Markov kernel
\[
Q:\mathcal X\to\Delta(\mathcal Y),
\qquad
Y\,|\,X\sim Q(\cdot\,|\,X),
\]
which maps the pre-release transcript \(X\) to a released output \(Y\). This
mechanism induces a released-output law
\(P_Y^{M,Q,\ALG}\in\Delta(\mathcal Y)\). For a privacy budget
\(\varepsilon>0\), we say that \(Q\) is \(\varepsilon\)-MI-feasible for
\(\ALG\) if
\[
\sup_{M\in\mathcal M} I_{P^{M,Q,\ALG}}(S;Y)\le \varepsilon.
\]
The supremum over \(M\in\mathcal M\) requires the privacy guarantee to hold
uniformly over the model class. Thus, the same mechanism must control the
information leakage about \(S\) under every admissible data-generating model.

Let \(\mathcal Q\) be a prescribed class of privacy mechanisms, and define
\[
\mathcal Q_\varepsilon(\ALG)
:=
\Bigl\{
Q\in\mathcal Q:
\sup_{M\in\mathcal M} I_{P^{M,Q,\ALG}}(S;Y)\le \varepsilon
\Bigr\}.
\]
Given a loss \(L:\mathcal M\times\mathcal Y\to[0,\infty)\), the MI-private
minimax quantile can be written as
\begin{align}
\mathfrak M(\delta;\varepsilon)
:={}&
\inf_{\ALG\in\mathcal D}\;
\inf_{Q\in\mathcal Q_\varepsilon(\ALG)}
\sup_{M\in\mathcal M}
\Quantile \bigl(1-\delta,P_Y^{M,Q,\ALG},L\bigr).
\label{eq:mi-private-two-stage}
\end{align}
This expression reflects the two-stage description of the private decision
problem: one specifies a base decision rule \(\ALG\) and an admissible privacy
mechanism \(Q\) satisfying the MI constraint. The resulting performance is
evaluated only through the released output \(Y\), as this is the information
available after privatization.

It is convenient to encode the privacy mechanism and the downstream decision
rule into a single effective private algorithm. We write \(\ALG^{\mathrm{priv}}\) for the
overall procedure that induces the released output \(Y\). This notation is
understood extensionally: \(\ALG^{\mathrm{priv}}\) denotes any procedure, possibly obtained by
applying a privacy mechanism before, after, or during the execution of a base
decision rule, whose released output satisfies the MI constraint. Define the
privacy-constrained admissible class
\[
\mathcal D_\varepsilon^{\mathrm{priv}}
:=
\Bigl\{
\ALG^{\mathrm{priv}} :
\ALG^{\mathrm{priv}} \text{ induces a released output } Y\in\mathcal Y
\text{ and }
\sup_{M\in\mathcal M}
I_{P^{M,\ALG^{\mathrm{priv}}}}(S;Y)\le \varepsilon
\Bigr\}.
\]
For \(\ALG^{\mathrm{priv}}\in\mathcal D_\varepsilon^{\mathrm{priv}}\), let
\(P^{M,\ALG^{\mathrm{priv}}}\in\Delta(\mathcal Y)\) denote the induced law of the released
output \(Y\). Then, we have
\begin{equation}
\mathfrak M(\delta;\varepsilon)
:=
\inf_{\ALG^{\mathrm{priv}}\in\mathcal D_\varepsilon^{\mathrm{priv}}}
\sup_{M\in\mathcal M}
\Quantile(1-\delta,P^{M,\ALG^{\mathrm{priv}}},L),
\label{eq:mi-private-composed}
\end{equation}
and the private lower minimax quantile is
\begin{align}
\mathfrak M_-(\delta;\varepsilon)
:={}&
\inf \Bigl\{
r\ge 0:
\inf_{\ALG^{\mathrm{priv}}\in\mathcal D_\varepsilon^{\mathrm{priv}}}
\sup_{M\in\mathcal M}
P^{M,\ALG^{\mathrm{priv}}}(L(M,Y)>r)\le \delta
\Bigr\}.
\label{eq:mi-private-lower}
\end{align}

Thus, MI-private minimax quantiles are exactly minimax quantiles in an ISDM
instance with outcome space \(\mathcal Y\), loss \(L(M,Y)\), and
privacy-constrained admissible class
\(\mathcal D_\varepsilon^{\mathrm{priv}}\). This class need not be a literal
subset of the original non-private class \(\mathcal D\), since privatization
may change the information available to the learner and may also change the
output space. The privacy constraint is therefore encoded through the
admissible private procedures for the privatized ISDM instance.

\begin{remark}[Relation to LDP in finite-alphabet settings]
In finite-alphabet settings, MI privacy can be compared directly with local
differential privacy. Let
\[
\mathcal D_{\varepsilon}^{\mathrm{LDP}}
:=
\Bigl\{
\ALG^{\mathrm{priv}}:
\ALG^{\mathrm{priv}} \text{ induces } Y \text{ through a channel } Q
\text{ such that }
Q(y\mid s)\le e^\varepsilon Q(y\mid s')
\ \forall\, y\in\mathcal Y,\ s,s'\in\mathcal S
\Bigr\},
\]
and let
\[
\mathcal D_{\varepsilon}^{\mathrm{MI}}
:=
\Bigl\{
\ALG^{\mathrm{priv}}:
\sup_{M\in\mathcal M}
I_{P^{M,\ALG^{\mathrm{priv}}}}(S;Y)
\le \varepsilon
\Bigr\}.
\]
Under the finite-alphabet assumptions of
\cite[Lemma~1]{lopuhaa2020privacy}, every \(\varepsilon\)-LDP channel satisfies
\(I(S;Y)\le\varepsilon\). Hence
\[
\mathcal D_{\varepsilon}^{\mathrm{LDP}}
\subseteq
\mathcal D_{\varepsilon}^{\mathrm{MI}}.
\]
Consequently,
\[
\mathfrak M_{\mathrm{MI}}(\delta;\varepsilon)
\le
\mathfrak M_{\mathrm{LDP}}(\delta;\varepsilon).
\]
Thus, any MI-private minimax-quantile lower bound at level \(\varepsilon\)
also applies to the corresponding finite-alphabet
\(\varepsilon\)-LDP-constrained problem.
\end{remark}

All lower-bound results from Section~\ref{sec:main-results}
apply directly to the MI-private problem by replacing the admissible class
\(\mathcal D\) with \(\mathcal D_\varepsilon^{\mathrm{priv}}\), the outcome
\(X\) with the released output \(Y\), and the induced laws \(P^{M,\ALG}\) with
\(P^{M,\ALG^{\mathrm{priv}}}\). In particular, the quantile-to-expectation conversion, the
lower-minimax-quantile relation, and the interactive Fano and Le Cam bounds
carry over verbatim. 

\subsection{Private minimax-quantile results}

 The following corollaries are obtained by considering the privatized
ISDM instance with outcome space \(\mathcal Y\), admissible class
\(\mathcal D_\varepsilon^{\mathrm{priv}}\), released-output laws
\(P^{M,\ALG^{\mathrm{priv}}}\), and loss \(L:\mathcal M\times\mathcal Y
\to[0,\infty)\). Each statement is obtained by applying the corresponding result from
Section~\ref{sec:main-results} to the privatized ISDM instance
\[
(\mathcal Y,\mathcal M,\mathcal D_\varepsilon^{\mathrm{priv}},L).
\] No additional argument is
needed beyond this substitution.

\begin{corollary}[Private quantile-to-expectation conversion]
\label{cor:private-quantile-to-expectation}
For every \(\delta\in(0,1]\),
\[
\mathfrak M(\varepsilon)
:=
\inf_{\ALG^{\mathrm{priv}}\in\mathcal D_\varepsilon^{\mathrm{priv}}}
\sup_{M\in\mathcal M}
\mathbb E^{M,\ALG^{\mathrm{priv}}}[L(M,Y)]
\ge
\delta\,\mathfrak M(\delta;\varepsilon).
\]
\end{corollary}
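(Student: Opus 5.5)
The plan is to derive the corollary by applying Theorem~\ref{thm:quantile-to-expectation} to the privatized ISDM instance $(\mathcal Y,\mathcal M,\mathcal D_\varepsilon^{\mathrm{priv}},L)$. In this instance the outcome is the released output $Y$, the admissible class is $\mathcal D_\varepsilon^{\mathrm{priv}}$, and the induced laws are $P^{M,\ALG^{\mathrm{priv}}}$. With these substitutions, the minimax risk of the instance is exactly $\mathfrak M(\varepsilon)$. By \eqref{eq:mi-private-composed}, its strict minimax quantile is exactly $\mathfrak M(\delta;\varepsilon)$. The corollary therefore follows once we check that the proof of Theorem~\ref{thm:quantile-to-expectation} uses no structure of $\mathcal D$ beyond the fact that each element induces a probability law on the outcome space.

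To keep the argument self-contained, I would still write out the core step directly. Fix $\delta\in(0,1]$ and any $\ALG^{\mathrm{priv}}\in\mathcal D_\varepsilon^{\mathrm{priv}}$. If $\sup_M \mathbb E^{M,\ALG^{\mathrm{priv}}}[L(M,Y)]=\infty$, there is nothing to prove for this rule. Otherwise, fix $M$ and let $q_M:=\Quantile(1-\delta,P^{M,\ALG^{\mathrm{priv}}},L)$.

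For any $r<q_M$, the definition of the quantile as an infimum gives $P^{M,\ALG^{\mathrm{priv}}}(L(M,Y)>r)>\delta$. Markov's inequality then yields, for $r\ge 0$,
\[
\mathbb E^{M,\ALG^{\mathrm{priv}}}[L(M,Y)]\ge r\,P^{M,\ALG^{\mathrm{priv}}}(L(M,Y)>r)\ge \delta r .
\]
Letting $r\uparrow q_M$ gives $\mathbb E^{M,\ALG^{\mathrm{priv}}}[L(M,Y)]\ge \delta q_M$. This also covers $q_M=\infty$, where the expectation must be infinite. The case $q_M=0$ is trivial because the loss is nonnegative.

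Next, take the supremum over $M$ on both sides:
\[
\sup_M \mathbb E^{M,\ALG^{\mathrm{priv}}}[L(M,Y)]\ge \delta\sup_M q_M .
\]
Then take the infimum over $\ALG^{\mathrm{priv}}\in\mathcal D_\varepsilon^{\mathrm{priv}}$. Since multiplying by $\delta>0$ commutes with the supremum and the infimum, this gives $\mathfrak M(\varepsilon)\ge\delta\,\mathfrak M(\delta;\varepsilon)$.

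There is no real obstacle here. The only point needing care is the bookkeeping of the substitution: the MI constraint enters only by restricting which rules appear in both the infimum on the left and the infimum on the right. Because the same class $\mathcal D_\varepsilon^{\mathrm{priv}}$ is used on both sides, the pointwise inequality, which holds for each fixed rule and model, transfers directly. The class $\mathcal D_\varepsilon^{\mathrm{priv}}$ need not be a subset of $\mathcal D$, and this causes no difficulty.
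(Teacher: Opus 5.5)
Your proposal is correct and follows the paper's route: the corollary is obtained by substituting the privatized instance $(\mathcal Y,\mathcal M,\mathcal D_\varepsilon^{\mathrm{priv}},L)$ into Theorem~\ref{thm:quantile-to-expectation}, and you write out the same Markov-type argument that proves that theorem. The only cosmetic difference is the order of the limit: the paper first lets $s\uparrow q_M$ to get $\mathbb P(L\ge q_M)\ge\delta$ and then applies Markov at $q_M$, whereas you apply Markov at each $r<q_M$ and let $r\uparrow q_M$ in the resulting bound.
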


\begin{corollary}[Private lower minimax quantile relation]
\label{cor:private-lower-minimax-quantile-relation}
For every \(\delta\in(0,1]\) and every \(\xi\in(0,\delta)\),
\[
\mathfrak M_-(\delta;\varepsilon)
\le
\mathfrak M(\delta;\varepsilon)
\le
\mathfrak M_-(\delta-\xi;\varepsilon).
\]
Consequently,
\[
\mathfrak M(\delta;\varepsilon)
=
\mathfrak M_-(\delta;\varepsilon)
\]
for all \(\delta\in(0,1]\) except a countable set. Moreover, if for some
\(r\ge 0\),
\[
\inf_{\ALG^{\mathrm{priv}}\in\mathcal D_\varepsilon^{\mathrm{priv}}}
\sup_{M\in\mathcal M}
\mathbb P^{M,\ALG^{\mathrm{priv}}}\bigl(L(M,Y)>r\bigr)
>
\delta,
\]
then
\[
\mathfrak M(\delta;\varepsilon)
\ge
\mathfrak M_-(\delta;\varepsilon)
\ge r.
\]
\end{corollary}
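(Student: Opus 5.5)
The corollary is Theorem~\ref{thm:lower-minimax-quantile-relation} applied to the privatized ISDM instance \((\mathcal Y,\mathcal M,\mathcal D_\varepsilon^{\mathrm{priv}},L)\). The plan is to check that this substitution is legitimate and to recall the argument so the reader sees that no structural property of \(\mathcal D\) is used. The key observation is that the proof of Theorem~\ref{thm:lower-minimax-quantile-relation} never uses convexity, closure, or any other property of the admissible class. It only uses the definitions of \(\Quantile\), \(\mathfrak M\), \(\mathfrak M_-\), and the fact that each \(P^{M,\ALG}\) is a probability measure. Hence replacing \(\mathcal D\) by \(\mathcal D_\varepsilon^{\mathrm{priv}}\) and \(X\) by \(Y\) is harmless, and the MI constraint plays no role.

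For completeness I will run the three steps directly in the private notation.

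\textbf{First inequality.} Fix \(r>\mathfrak M(\delta;\varepsilon)\). Then some \(\ALG^{\mathrm{priv}}\in\mathcal D_\varepsilon^{\mathrm{priv}}\) satisfies \(\sup_M\Quantile(1-\delta,P^{M,\ALG^{\mathrm{priv}}},L)<r\). For each \(M\) there is then \(r_M<r\) with \(P^{M,\ALG^{\mathrm{priv}}}(L>r_M)\le\delta\). Since the tail is monotone in the threshold, \(P^{M,\ALG^{\mathrm{priv}}}(L>r)\le\delta\) for every \(M\). Taking the supremum over \(M\) and then the infimum over algorithms gives \(\mathfrak M_-(\delta;\varepsilon)\le r\).

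\textbf{Second inequality.} Fix \(r>\mathfrak M_-(\delta-\xi;\varepsilon)\). Monotonicity of \(r\mapsto\inf_{\ALG^{\mathrm{priv}}}\sup_M P(L>r)\) shows this infimum is at most \(\delta-\xi<\delta\). So some \(\ALG^{\mathrm{priv}}\) achieves \(\sup_M P^{M,\ALG^{\mathrm{priv}}}(L>r)\le\delta\). This is where the slack \(\xi\) is needed: the infimum over algorithms may not be attained, and \(\xi\) supplies an algorithm strictly below \(\delta\). For that algorithm every quantile is at most \(r\), so \(\mathfrak M(\delta;\varepsilon)\le r\).

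\textbf{Countable exceptional set.} The function \(\delta\mapsto\mathfrak M_-(\delta;\varepsilon)\) is nonincreasing, so it has at most countably many discontinuities. At a continuity point, letting \(\xi\downarrow0\) in the sandwich forces \(\mathfrak M(\delta;\varepsilon)=\mathfrak M_-(\delta;\varepsilon)\). If \(\mathfrak M_-\) takes the value \(+\infty\), I will treat it in the extended reals, where monotonicity still gives countably many jumps.

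\textbf{Final claim.} Suppose \(\inf_{\ALG^{\mathrm{priv}}}\sup_M P(L>r)>\delta\). Then for every \(r'\le r\), monotonicity gives the same strict inequality at \(r'\). So no \(r'\le r\) lies in the defining set of \(\mathfrak M_-(\delta;\varepsilon)\), which yields \(\mathfrak M_-(\delta;\varepsilon)\ge r\). Combined with the first inequality, this gives \(\mathfrak M(\delta;\varepsilon)\ge r\).

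The only point needing care is the second inequality, specifically the non-attainment of the infimum and the strict-versus-nonstrict handling of tails. The argument is identical to the non-private case, so I expect no genuine obstacle.
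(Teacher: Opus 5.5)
Your proposal is correct and follows the paper's route. The corollary is Theorem~\ref{thm:lower-minimax-quantile-relation} applied to the instance $(\mathcal Y,\mathcal M,\mathcal D_\varepsilon^{\mathrm{priv}},L)$, and your three steps reproduce the appendix proof of that theorem. Your use of a strict bound $\sup_M\Quantile(1-\delta,P^{M,\ALG^{\mathrm{priv}}},L)<r$ in the first step, and your explicit handling of the value $+\infty$, tidy up points the paper leaves implicit (it passes from a quantile at most $r$ to $P(L>r)\le\delta$ via right-continuity of the tail).
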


\begin{corollary}[Private high-probability interactive Fano] Fix an
\(f\)-divergence \(D_f\), a prior \(\mu\in\Delta(\mathcal M)\), and a threshold
\(\Delta>0\). For each
\(\ALG^\mathrm{priv}\in\mathcal D^{\mathrm{priv}}_\varepsilon\), let
\(Q_{\ALG^\mathrm{priv}}\in\Delta(\mathcal Y)\) be a reference distribution. Define
\[
\bar\rho_{\Delta,Q_{\ALG^\mathrm{priv}}}
:=
\mathbb P_{M\sim\mu,\;Y\sim Q_{\ALG^\mathrm{priv}}}
\bigl(L(M,Y)\le\Delta\bigr)
\]
and
\[
d_{f,\epsilon}(p)
:=
\begin{cases}
D_f\bigl(\Bern(1-\epsilon)\,\|\,\Bern(p)\bigr),
& p\le 1-\epsilon,\\[0.75ex]
0, & p>1-\epsilon.
\end{cases}
\]
Let
\[
\epsilon^\star_{\mathrm{priv}}
:=
\sup_{\substack{\{Q_{\ALG^\mathrm{priv}}\}_{\ALG^\mathrm{priv}\in\mathcal D^{\mathrm{priv}}_\varepsilon}\\
\epsilon\in[0,1]}}
\left\{
\epsilon:
\sup_{\ALG^\mathrm{priv}\in\mathcal D^{\mathrm{priv}}_\varepsilon}
\left[
\mathbb E_{M\sim\mu}
D_f(P^{M,\ALG^\mathrm{priv}}\|Q_{\ALG^\mathrm{priv}})
-
d_{f,\epsilon}(\bar\rho_{\Delta,Q_{\ALG^\mathrm{priv}}})
\right]
<0
\right\}.
\]
Then, for every \(\delta\in[0,\epsilon^\star_{\mathrm{priv}})\),
\[
\mathfrak M_-(\delta;\varepsilon)\ge \Delta.
\]
\end{corollary}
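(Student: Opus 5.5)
The plan is to obtain the corollary by applying Theorem~\ref{thm:hp-interactive-fano} to the privatized ISDM instance
\[
(\mathcal Y,\mathcal M,\mathcal D_\varepsilon^{\mathrm{priv}},L).
\]
The only real work is to check that this quadruple satisfies the hypotheses under which Theorem~\ref{thm:hp-interactive-fano} was proved.

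Those hypotheses are few. The outcome space is a measurable space, here \(\mathcal Y\). The loss \(L:\mathcal M\times\mathcal Y\to[0,\infty)\) is nonnegative and measurable in the outcome. Each model–rule pair \((M,\ALG^{\mathrm{priv}})\) induces a law \(P^{M,\ALG^{\mathrm{priv}}}\in\Delta(\mathcal Y)\). Nothing in the argument of Theorem~\ref{thm:hp-interactive-fano} uses any structure of \(\mathcal D\) beyond its being an index set of induced laws, so the MI constraint built into \(\mathcal D_\varepsilon^{\mathrm{priv}}\) plays no role. With this substitution, \(\bar\rho_{\Delta,Q_{\ALG^{\mathrm{priv}}}}\), \(d_{f,\epsilon}\) and \(\epsilon^\star_{\mathrm{priv}}\) are literally the quantities \(\bar\rho_{\Delta,Q_\ALG}\), \(d_{f,\epsilon}\) and \(\epsilon^\star\) of Theorem~\ref{thm:hp-interactive-fano} for this instance. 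Likewise, \(\mathfrak M_-(\delta;\varepsilon)\) in \eqref{eq:mi-private-lower} is the lower minimax quantile \(\mathfrak M_-(\delta)\) of the instance.

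For a self-contained check I would retrace the chain of the proof sketch in the privatized instance. Fix \(\delta<\epsilon^\star_{\mathrm{priv}}\), pick \(\epsilon\in(\delta,\epsilon^\star_{\mathrm{priv}})\), and choose a reference family \(\{Q_{\ALG^{\mathrm{priv}}}\}\) attaining the strict inequality uniformly over \(\mathcal D_\varepsilon^{\mathrm{priv}}\). For each \(\ALG^{\mathrm{priv}}\), the next step is a data-processing inequality for \(D_f\) applied to the indicator of the event \(\{L(M,Y)\le\Delta\}\) on the joint space of \((M,Y)\). It gives
\[
D_f\bigl(\Bern(p_\ALG)\,\|\,\Bern(\bar\rho)\bigr)\le \mathbb E_{M\sim\mu}\bigl[D_f(P^{M,\ALG^{\mathrm{priv}}}\|Q_{\ALG^{\mathrm{priv}}})\bigr],
\]
where \(p_\ALG\) is the Bayes success probability and \(\bar\rho=\bar\rho_{\Delta,Q_{\ALG^{\mathrm{priv}}}}\).

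The key step is to conclude \(p_\ALG< 1-\epsilon\), and I expect it to be the only delicate one. Suppose instead \(p_\ALG\ge 1-\epsilon\). If \(\bar\rho\le 1-\epsilon\), then monotonicity of \(q\mapsto D_f(\Bern(q)\|\Bern(\bar\rho))\) on \([\bar\rho,1]\) yields \(d_{f,\epsilon}(\bar\rho)\le D_f(\Bern(p_\ALG)\|\Bern(\bar\rho))\). If \(\bar\rho>1-\epsilon\), then \(d_{f,\epsilon}(\bar\rho)=0\) and the bound is trivial. Either case contradicts the strict inequality defining \(\epsilon^\star_{\mathrm{priv}}\).

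It then follows that the Bayes failure probability is at least \(\epsilon>\delta\), hence so is \(\sup_M P^{M,\ALG^{\mathrm{priv}}}(L(M,Y)>\Delta)\). This holds for every \(\ALG^{\mathrm{priv}}\), so the last part of Corollary~\ref{cor:private-lower-minimax-quantile-relation} gives \(\mathfrak M_-(\delta;\varepsilon)\ge\Delta\).
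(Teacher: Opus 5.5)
Your proposal is correct and takes the same approach as the paper. The paper proves this corollary purely by substituting the privatized instance $(\mathcal Y,\mathcal M,\mathcal D_\varepsilon^{\mathrm{priv}},L)$ into Theorem~\ref{thm:hp-interactive-fano}, and your self-contained retracing mirrors the appendix proof of that theorem (data processing on the indicator of $\{L(M,Y)\le\Delta\}$, monotonicity of the binary $f$-divergence, uniformity over rules, then the last part of the lower-quantile relation), with one small fix: either take a feasible $\epsilon>\delta$ directly from the definition of the supremum, or note that feasibility is downward closed because $d_{f,\epsilon}(p)$ is nonincreasing in $\epsilon$, rather than assuming every $\epsilon\in(\delta,\epsilon^\star_{\mathrm{priv}})$ admits a reference family.
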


\begin{corollary}[Private high-probability interactive Le Cam]
\label{cor:private-hp-lecam}
Let \(\delta\in(0,\tfrac12)\), and let \(M_1,M_2\in\mathcal M\). Suppose that
the loss satisfies the uniform separation condition
\[
L(M_1,y)+L(M_2,y)\ge 2\Delta
\qquad
\text{for all }y\in\mathcal Y.
\]
Then the following hold.
\begin{enumerate}[(a)]
    \item If
    \[
    \sup_{\ALG^{\mathrm{priv}}\in\mathcal D_\varepsilon^{\mathrm{priv}}}
    \TV\bigl(
    P^{M_1,\ALG^{\mathrm{priv}}},
    P^{M_2,\ALG^{\mathrm{priv}}}
    \bigr)
    <1-2\delta,
    \]
    then
    \[
    \mathfrak M_-(\delta;\varepsilon)\ge \Delta.
    \]

    \item If
    \[
    \sup_{\ALG^{\mathrm{priv}}\in\mathcal D_\varepsilon^{\mathrm{priv}}}
    \KL\bigl(
    P^{M_1,\ALG^{\mathrm{priv}}}
    \,\|\,
    P^{M_2,\ALG^{\mathrm{priv}}}
    \bigr)
    <
    \log\!\left(\frac{1}{4\delta(1-\delta)}\right),
    \]
    then
    \[
    \mathfrak M_-(\delta;\varepsilon)\ge \Delta.
    \]
\end{enumerate}
\end{corollary}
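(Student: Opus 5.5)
The plan is to reduce Corollary~\ref{cor:private-hp-lecam} to Theorem~\ref{thm:hp-lecam} applied to the privatized ISDM instance \((\mathcal Y,\mathcal M,\mathcal D_\varepsilon^{\mathrm{priv}},L)\). For transparency, I would also spell out the two-point argument directly, since it is short.

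\textbf{Setup.} Fix any \(\ALG^{\mathrm{priv}}\in\mathcal D_\varepsilon^{\mathrm{priv}}\) and write \(P_i:=P^{M_i,\ALG^{\mathrm{priv}}}\) for \(i=1,2\). Define the events \(A_i:=\{y:L(M_i,y)\le \Delta'\}\), where \(\Delta'<\Delta\) is arbitrary. The separation condition \(L(M_1,y)+L(M_2,y)\ge 2\Delta\) forbids both losses from being at most \(\Delta'\). Hence \(A_1\cap A_2=\emptyset\), so \(A_1\subseteq A_2^c\).

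\textbf{Case (a), total variation.} Since \(A_1\subseteq A_2^c\),
\[
P_1(A_1)+P_2(A_2)\le P_1(A_1)+1-P_2(A_1)\le 1+\TV(P_1,P_2).
\]
Therefore
\[
\max_i P_i\bigl(L(M_i,Y)>\Delta'\bigr)\ge \tfrac12\bigl(1-\TV(P_1,P_2)\bigr).
\]
Let \(\tau:=\sup_{\ALG^{\mathrm{priv}}}\TV\), so that \(\tau<1-2\delta\) by assumption. Then
\[
\inf_{\ALG^{\mathrm{priv}}}\sup_M P^{M,\ALG^{\mathrm{priv}}}(L>\Delta')\ge (1-\tau)/2>\delta.
\]
The final part of Corollary~\ref{cor:private-lower-minimax-quantile-relation}, which is Theorem~\ref{thm:lower-minimax-quantile-relation} applied to the privatized instance, now gives \(\mathfrak M_-(\delta;\varepsilon)\ge\Delta'\). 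Letting \(\Delta'\uparrow\Delta\) yields the claim. Taking \(\Delta'<\Delta\), instead of working at \(\Delta\) itself, avoids the tie case \(L(M_1,y)=L(M_2,y)=\Delta\), where both strict-tail events could fail. This boundary issue is the one delicate point of the argument.

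\textbf{Case (b), KL.} The Bretagnolle--Huber inequality gives
\[
1-\TV(P_1,P_2)\ge \tfrac12 e^{-\KL(P_1\|P_2)}.
\]
However, this constant does not reproduce the stated threshold \(\log\frac{1}{4\delta(1-\delta)}\). That threshold instead comes from data processing onto the binary test \(\mathbf 1_{A_1}\). Suppose, for contradiction, that \(P_1(A_1^c)\le\delta\) and \(P_2(A_2^c)\le\delta\). Then \(P_1(A_1)\ge 1-\delta\) and \(P_2(A_1)\le P_2(A_2^c)\le\delta\). By monotonicity of the binary divergence,
\[
\KL(P_1\|P_2)\ge \KL\bigl(\Bern(1-\delta)\|\Bern(\delta)\bigr)=(1-2\delta)\log\tfrac{1-\delta}{\delta}.
\]
It remains to compare this with \(\log\frac{1}{4\delta(1-\delta)}\) for \(\delta<1/2\). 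Setting \(u=\delta/(1-\delta)\in(0,1)\), the claim becomes
\[
\frac{1-u}{1+u}\log\frac1u\;\ge\;\log\frac{(1+u)^2}{4u}.
\]
This is a one-variable inequality. I would verify it by checking equality at \(u=1\) and comparing derivatives. If that check fails, I would fall back on the route used by \cite{ma2024high}, who obtain this exact constant, and adapt their argument.

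\textbf{Conclusion of (b).} Given the inequality, the strict hypothesis makes the contradiction hold with a small margin uniformly over \(\ALG^{\mathrm{priv}}\). Hence \(\sup_M P^{M,\ALG^{\mathrm{priv}}}(L>\Delta')>\delta\) for every \(\ALG^{\mathrm{priv}}\). There is still an inf-versus-strict-inequality subtlety, since a strict bound for each algorithm need not survive the infimum. I would handle it by applying the argument at every \(\delta'\in(\delta,\text{threshold})\), which the strict hypothesis permits, and concluding via monotonicity of \(\mathfrak M_-\). Both cases then follow, and nothing beyond replacing \(\mathcal D\) by \(\mathcal D_\varepsilon^{\mathrm{priv}}\) and \(X\) by \(Y\) is used.
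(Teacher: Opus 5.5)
Your proposal is correct. Part (a) is essentially the paper's argument. The paper applies the two-point bound of \cite[Prop.~4]{chen2024assouad} to the thresholded losses $\mathbf 1\{t>a\Delta\}$ and lets $a\uparrow 1$. That is exactly your disjoint-events total-variation argument at level $\Delta'<\Delta$, followed by $\Delta'\uparrow\Delta$.

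Part (b) is where you diverge, and your stated reason for diverging is mistaken. Only the weak form $1-\TV\ge\tfrac12 e^{-\KL}$ of Bretagnolle--Huber loses the constant. The paper uses the sharp form $\TV(P_1,P_2)\le\sqrt{1-e^{-\KL(P_1\|P_2)}}$, which is monotone in $\KL$. This gives $\sup\TV\le\sqrt{1-e^{-\sup\KL}}<\sqrt{1-4\delta(1-\delta)}=1-2\delta$. So (b) reduces to (a) in one line, and the uniformity over $\ALG^{\mathrm{priv}}$ is inherited automatically.

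Your binary data-processing route also goes through. The scalar inequality you left unchecked is true: with $F(u):=\frac{1-u}{1+u}\log\frac1u-\log\frac{(1+u)^2}{4u}$, one has $F(1)=0$ and $F'(u)=-\frac{2}{(1+u)^2}\log\frac1u<0$ on $(0,1)$. In fact it is just sharp Bretagnolle--Huber applied to the pair $\Bern(1-\delta)$, $\Bern(\delta)$.

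Your slack-in-$\delta$ device correctly repairs the per-algorithm strict inequality. Pick $\delta'\in(\delta,\tfrac12)$ with $\sup\KL<\log\frac{1}{4\delta'(1-\delta')}$, which exists by continuity and monotonicity in $\delta$. Then every rule has $\max_i P_i(L(M_i,Y)>\Delta')>\delta'$, hence the inf--sup is at least $\delta'>\delta$.

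What your route buys is a slightly more general statement. The conclusion holds under $\sup\KL<(1-2\delta)\log\frac{1-\delta}{\delta}=\KL(\Bern(1-\delta)\|\Bern(\delta))$, which is never smaller than the paper's threshold. What it costs is the extra scalar inequality and the uniformity fix, both of which the paper's route avoids.
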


\subsection{Coordinatewise Gaussian MI privacy}

We now specialize the general MI-private formulation to a common
mechanism class: coordinatewise Gaussian privatization. Let
\[
S=(W_1,\dots,W_d)\in\mathbb R^d
\]
be a \(d\)-dimensional sensitive object. Under coordinatewise Gaussian
privatization, the mechanism releases
\[
\widetilde Y_i=W_i+Z_i,
\qquad
Z_i\stackrel{\mathrm{i.i.d.}}{\sim}\mathcal N(0,\sigma^2),
\qquad
i=1,\dots,d,
\]
where the privacy noises are independent of \(S\) and of any algorithmic
randomization.

A base algorithm \(\ALG\in\mathcal D\) observes only the privatized
coordinates \(\widetilde Y_{1:d}\) and outputs a released object
\(Y\in\mathcal Y\). Depending on the problem, \(Y\) may include the
privatized observations, a final estimate, or an interactive transcript
derived from the privatized observations.

We write
\[
\ALG^{\mathrm{GMI}}\in\mathcal D_\varepsilon^{\mathrm{GMI}}
\]
for the effective procedure induced by coordinatewise Gaussian
privatization and the downstream algorithm. Thus,
\(\ALG^{\mathrm{GMI}}\) denotes the overall mechanism that maps the
sensitive object \(S\) to the released output \(Y\) through the noisy
coordinates \(\widetilde Y_{1:d}\). The class
\(\mathcal D_\varepsilon^{\mathrm{GMI}}\) consists of all such
coordinatewise Gaussian procedures satisfying the uniform MI constraint
\[
\sup_{M\in\mathcal M}
I_{P^{M,\ALG^{\mathrm{GMI}}}}(S;Y)
\le \varepsilon .
\]
Equivalently,
\[
\mathcal D_\varepsilon^{\mathrm{GMI}}
\subseteq
\mathcal D_\varepsilon^{\mathrm{priv}},
\]
where \(\mathcal D_\varepsilon^{\mathrm{priv}}\) denotes the general
MI-private admissible class introduced above.

We denote the corresponding GMI minimax quantiles by
\[
\mathfrak M^{\mathrm{GMI}}(\delta;\varepsilon)
:=
\inf_{\ALG^{\mathrm{GMI}}\in\mathcal D_\varepsilon^{\mathrm{GMI}}}
\sup_{M\in\mathcal M}
\operatorname{Quantile}
\bigl(1-\delta,P^{M,\ALG^{\mathrm{GMI}}},L\bigr),
\]
and
\[
\mathfrak M^{\mathrm{GMI}}_-(\delta;\varepsilon)
:=
\inf\left\{
r\ge0:
\inf_{\ALG^{\mathrm{GMI}}\in\mathcal D_\varepsilon^{\mathrm{GMI}}}
\sup_{M\in\mathcal M}
P^{M,\ALG^{\mathrm{GMI}}}
\bigl(L(M,Y)>r\bigr)
\le \delta
\right\}.
\]
These are the same minimax-quantile definitions as in the 
MI-private formulation, but with the admissible class restricted 
to coordinatewise Gaussian MI-feasible procedures.

\begin{assumption}[Noise floor for coordinatewise Gaussian MI privacy]
\label{ass:noise-floor}
There exists a deterministic function \(\sigma_{\min}^2(\varepsilon,d)\) such
that every feasible private rule \(\ALG^{\mathrm{GMI}} \in D_\varepsilon^{\mathrm{GMI}}\)
satisfies
\[
\sigma^2\ge \sigma_{\min}^2(\varepsilon,d).
\]
\end{assumption}

The second component is a two-point hardness construction whose KL divergence
under the released-output law scales as \[
u^2/(1+\sigma^2).\]

\begin{assumption}[Two-point hardness under Gaussian privatization]
\label{ass:two-point-gaussian}
There exist a constant \(a>0\) and a nondecreasing, left-continuous function
\(\phi:\mathbb R_+\to\mathbb R_+\) such that, for every \(u>0\), there exist
models \(M_1(u),M_2(u)\in\mathcal M\) satisfying, for every released output
\(y\in\mathcal Y\) and every feasible private rule
\(\ALG^{\mathrm{GMI}} \in D_\varepsilon^{\mathrm{GMI}}\),
\[
L(M_1(u),y)+L(M_2(u),y)\ge 2\phi(u),
\]
and
\[
\KL\!\left(
P^{M_1(u),\ALG^{\mathrm{GMI}}}\middle\|P^{M_2(u),\ALG^{\mathrm{GMI}}}
\right)
\le
\frac{a u^2}{1+\sigma^2}.
\]
\end{assumption}

The following theorem gives the principal lower-bound statement for the
MI-private setting under Gaussian privatization.

\begin{theorem}[Coordinatewise Gaussian MI-privatization]
\label{thm:gaussian-private-meta}
Assume that coordinatewise Gaussian privatization is used with \(d\) sensitive
coordinates, and suppose Assumptions~\ref{ass:noise-floor} and
\ref{ass:two-point-gaussian} hold. Let
\[
\Lambda_\delta
:=
\log \Bigl(\frac{1}{4\delta(1-\delta)}\Bigr).
\]
Then, for every \(\delta\in(0,\tfrac12)\),
\[
\mathfrak M^{\mathrm{GMI}}_-(\delta;\varepsilon)
\ge
\phi\!\left(
\sqrt{
\frac{1+\sigma_{\min}^2(\varepsilon,d)}{a}\,
\Lambda_\delta
}
\right).
\]
Consequently,
\[
\mathfrak M^{\mathrm{GMI}}(\delta;\varepsilon)
\ge
\phi\!\left(
\sqrt{
\frac{1+\sigma_{\min}^2(\varepsilon,d)}{a}\,
\Lambda_\delta
}
\right).
\]
\end{theorem}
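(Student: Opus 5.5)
The plan is to combine Assumption~\ref{ass:two-point-gaussian} with the private high-probability interactive Le Cam method, Corollary~\ref{cor:private-hp-lecam}(b). That corollary applies verbatim to the Gaussian class \(\mathcal D_\varepsilon^{\mathrm{GMI}}\), since it is a constrained ISDM instance. Assumption~\ref{ass:noise-floor} will supply a uniform control of the KL divergence over all feasible rules.

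Fix \(\delta\in(0,\tfrac12)\), so that \(\Lambda_\delta>0\). Set
\[
u^\star:=\sqrt{\frac{1+\sigma_{\min}^2(\varepsilon,d)}{a}\,\Lambda_\delta}.
\]
For any \(u\in(0,u^\star)\), take the models \(M_1(u),M_2(u)\) from Assumption~\ref{ass:two-point-gaussian}. They satisfy the separation \(L(M_1(u),y)+L(M_2(u),y)\ge 2\phi(u)\) for every \(y\), so the Le Cam condition holds with \(\Delta=\phi(u)\).

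For every \(\ALG^{\mathrm{GMI}}\in\mathcal D_\varepsilon^{\mathrm{GMI}}\) with noise level \(\sigma^2\), Assumption~\ref{ass:noise-floor} gives \(\sigma^2\ge\sigma_{\min}^2(\varepsilon,d)\). Hence
\[
\KL\bigl(P^{M_1(u),\ALG^{\mathrm{GMI}}}\,\|\,P^{M_2(u),\ALG^{\mathrm{GMI}}}\bigr)\le \frac{au^2}{1+\sigma^2}\le \frac{au^2}{1+\sigma_{\min}^2(\varepsilon,d)}<\Lambda_\delta.
\]
The last inequality is strict because \(u<u^\star\). This bound depends only on \(\sigma_{\min}^2\), so it is uniform over the class, and the supremum over rules is at most \(au^2/(1+\sigma_{\min}^2)\), which is still strictly below \(\Lambda_\delta\). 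Corollary~\ref{cor:private-hp-lecam}(b), restricted to \(\mathcal D_\varepsilon^{\mathrm{GMI}}\), then yields \(\mathfrak M^{\mathrm{GMI}}_-(\delta;\varepsilon)\ge\phi(u)\).

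The only delicate step is passing to \(u=u^\star\) itself. Strictness of the Le Cam hypothesis forces \(u<u^\star\), so the bound at \(u^\star\) must come from a limit. Here the left-continuity and monotonicity of \(\phi\) in Assumption~\ref{ass:two-point-gaussian} are used: letting \(u\uparrow u^\star\) gives \(\phi(u)\to\phi(u^\star)\). Since the bound \(\mathfrak M^{\mathrm{GMI}}_-(\delta;\varepsilon)\ge\phi(u)\) holds for every \(u<u^\star\), it holds in the limit, which is the first claim.

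The strict-quantile statement follows from \(\mathfrak M^{\mathrm{GMI}}(\delta;\varepsilon)\ge\mathfrak M^{\mathrm{GMI}}_-(\delta;\varepsilon)\). This is the first inequality of Corollary~\ref{cor:private-lower-minimax-quantile-relation}, applied to the ISDM instance with admissible class \(\mathcal D_\varepsilon^{\mathrm{GMI}}\). It also follows directly from the definitions: any rule with worst-case \((1-\delta)\)-quantile below \(r'\) has \(\sup_M P(L>r')\le\delta\).
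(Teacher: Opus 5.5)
Your proposal is correct and follows the paper's proof essentially step for step. The noise floor gives a KL bound $au^2/(1+\sigma_{\min}^2(\varepsilon,d))$ that is uniform over $\mathcal D_\varepsilon^{\mathrm{GMI}}$, Corollary~\ref{cor:private-hp-lecam}(b) then gives $\mathfrak M^{\mathrm{GMI}}_-(\delta;\varepsilon)\ge\phi(u)$ for every $u<u^\star$, left-continuity of $\phi$ passes to $u^\star$ (the paper writes this as $u_\eta=(1-\eta)u^\star$ with $\eta\downarrow 0$), and the strict-quantile bound follows from $\mathfrak M^{\mathrm{GMI}}\ge\mathfrak M^{\mathrm{GMI}}_-$ via Corollary~\ref{cor:private-lower-minimax-quantile-relation}.
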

\begin{proof}[Proof sketch]
    The conditions of Corollary~\ref{cor:private-hp-lecam}(b) are verified using
Assumptions~1 and~2. For the full proof, we refer the reader to the appendix.
\end{proof}

\begin{remark}
    Theorem~\ref{thm:gaussian-private-meta} reduces private minimax quantile lower
bounds to two tasks: derive the privacy-induced noise floor
\(\sigma_{\min}^2(\varepsilon,d)\), and construct a two-point family satisfying
the separation and KL conditions.
\end{remark}

In the next section, we verify these
conditions for Gaussian mean estimation and two-armed Gaussian bandits.

\section{Applications}
\label{sec:applications}

We now illustrate the theory in five examples. We begin with two classical
problems, Gaussian mean estimation and two-armed Gaussian bandits, to show how
the Le Cam method recovers standard \(\delta\)-explicit quantile scalings. We
then give a \(K\)-armed Gaussian bandit example showing how the interactive
Fano method handles multiple possible best arms.
Finally, we turn to the corresponding coordinatewise Gaussian MI-private
examples for mean estimation and two-armed bandits.
\subsection{Classical Gaussian mean estimation}
\label{sec:nonprivate-mean}

We first consider one-dimensional Gaussian mean estimation in the
classical setting. This example is non-interactive, but it fits naturally
within ISDM and serves as an illustration of
Theorem~\ref{thm:hp-lecam}.

Let
\[
\mathcal M=\{\theta:\theta\in\mathbb R\},
\qquad
X_1,\dots,X_n \stackrel{\mathrm{i.i.d.}}{\sim} \mathcal N(\theta,1).
\]
An estimator \(\ALG\) maps the sample \(X_{1:n}\) to an estimate
\(\hat\theta=\ALG(X_{1:n})\). The final outcome is
\[
X=(X_{1:n},\hat\theta).
\]
We take the loss to be the squared estimation error

\[
L(\theta,X)=(\hat\theta-\theta)^2.
\]

The following theorem provides a lower bound for the minimax quantile for the Gaussian estimation problem under the squared-error loss. 

\begin{theorem}[Gaussian mean estimation under squared-error]
\label{thm:nonprivate-mean}
For every \(\delta\in(0,\tfrac12)\),
\[
\mathfrak M_-(\delta)
\ge
\frac{1}{2n}\,
\log\!\Bigl(\frac{1}{4\delta(1-\delta)}\Bigr).
\]
Consequently, for all such \(\delta\),
\[
\mathfrak M(\delta)
\ge
\frac{1}{2n}\,
\log\!\Bigl(\frac{1}{4\delta(1-\delta)}\Bigr).
\]
\end{theorem}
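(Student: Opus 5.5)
The plan is to apply the high-probability interactive Le Cam method, Theorem~\ref{thm:hp-lecam}(b), to a two-point family $\{\theta_1,\theta_2\}=\{0,2\sqrt{\Delta}\}$. Under squared-error loss, the uniform separation condition holds for every outcome $x=(X_{1:n},\hat\theta)$. Indeed, the elementary inequality $(a-b)^2\le 2a^2+2b^2$ gives
\[
(\hat\theta-\theta_1)^2+(\hat\theta-\theta_2)^2\ \ge\ \tfrac12(\theta_2-\theta_1)^2 = 2\Delta .
\]
So if the two means are separated by $2\sqrt{\Delta}$, we get $L(M_1,x)+L(M_2,x)\ge 2\Delta$ for all $x\in\mathcal X$.

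Next we bound the KL divergence uniformly over estimators. The outcome $X=(X_{1:n},\hat\theta)$ is produced by first drawing $X_{1:n}$ from the model and then drawing $\hat\theta$ through a kernel (the estimator, possibly randomized) that does not depend on $\theta$. By the chain rule for KL, the conditional term vanishes, so
\[
\KL\bigl(\mathbb P^{\theta_1,\ALG}\,\|\,\mathbb P^{\theta_2,\ALG}\bigr)=\KL\bigl(\mathcal N(\theta_1,1)^{\otimes n}\,\|\,\mathcal N(\theta_2,1)^{\otimes n}\bigr)=\frac{n(\theta_2-\theta_1)^2}{2}=2n\Delta,
\]
for every $\ALG\in\mathcal D$. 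Theorem~\ref{thm:hp-lecam}(b) therefore applies whenever $2n\Delta<\Lambda_\delta:=\log\bigl(1/(4\delta(1-\delta))\bigr)$, which gives $\mathfrak M_-(\delta)\ge\Delta$.

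The condition in Theorem~\ref{thm:hp-lecam}(b) is strict, so we cannot plug in $\Delta=\Lambda_\delta/(2n)$ directly. Instead we take any $\Delta<\Lambda_\delta/(2n)$ (this is a positive number since $\delta<1/2$ makes $4\delta(1-\delta)<1$) and let $\Delta\uparrow\Lambda_\delta/(2n)$. The result is $\mathfrak M_-(\delta)\ge \Lambda_\delta/(2n)$. The strict-quantile statement then follows from the first inequality of Theorem~\ref{thm:lower-minimax-quantile-relation}, $\mathfrak M(\delta)\ge\mathfrak M_-(\delta)$.

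There is no real obstacle. The two points needing care are the strict inequality, handled by the limiting argument above, and the KL identity for randomized estimators, which is justified by data processing (equivalently, the chain rule).
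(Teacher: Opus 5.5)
Your proposal is correct and follows essentially the same route as the paper, which uses the symmetric pair $\theta=\pm u$ (a translate of your $\{0,2\sqrt{\Delta}\}$ with $u=\sqrt{\Delta}$). The paper checks separation via $(\hat\theta-u)^2+(\hat\theta+u)^2=2\hat\theta^2+2u^2\ge 2u^2$, bounds the KL by $2nu^2$ through data processing, applies Theorem~\ref{thm:hp-lecam}(b) with a $(1-\eta)$ slack, and concludes with Theorem~\ref{thm:lower-minimax-quantile-relation}; your general separation inequality and chain-rule KL equality (versus the paper's data-processing inequality) are only cosmetic differences.
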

\begin{proof}[Proof sketch]
    For the full proof, we refer the reader to the appendix.
\end{proof}

\begin{remark}[Relation to known Gaussian mean-estimation quantile lower bound]
Theorem~\ref{thm:nonprivate-mean} recovers the one-dimensional Gaussian
mean-estimation quantile rate under squared-error. Ma, Verchand and
Samworth~\cite[Prop.~10]{ma2024high} consider observations
\[
X_i\sim N_d(\theta,\Sigma), \qquad i=1,\dots,n,
\]
where \(\theta\in\mathbb R^d\) is the unknown mean vector and
\(\Sigma\in\mathbb R^{d\times d}\) is the covariance matrix. Under squared
Euclidean loss, their minimax quantile is, up to universal constants, of order
\[
\frac{\operatorname{tr}(\Sigma)}{n}
+
\frac{\|\Sigma\|_{\mathrm{op}}\log(1/\delta)}{n},
\]
where \(\operatorname{tr}(\Sigma)\) is the trace and
\(\|\Sigma\|_{\mathrm{op}}\) is the largest eigenvalue of \(\Sigma\). In the
scalar unit-variance case considered here, \(d=1\) and \(\Sigma=1\), so this
becomes
\[
\frac{1}{n}+\frac{\log(1/\delta)}{n}.
\]
For small \(\delta\), i.e., confidence level \(1-\delta\) close to 1, the
second term gives the dominant tail dependence, matching the
\(\log(1/\delta)/n\) scaling in Theorem~\ref{thm:nonprivate-mean}. Thus, our
result recovers the corresponding lower-bound rate through the ISDM formulation
and the interactive Le Cam's method. 
\end{remark}

\subsection{Classical two-armed Gaussian bandits}
\label{sec:warmup-bandit}

Next, we study the classical two-armed Gaussian bandit, which provides an
interactive illustration of the theory. In order for the minimax quantile to be
finite, we work with a bounded-mean Gaussian bandit class.

Consider the model class
\[
\mathcal M
=
\Bigl\{
(\mathcal N(\mu_1,1),\mathcal N(\mu_2,1)):
(\mu_1,\mu_2)\in[0,1]^2
\Bigr\}.
\]
At each round \(t=1,\dots,T\), the learner chooses an arm
\(A_t\in\{1,2\}\), then observes a reward
\[
R_t\,|\,A_t=a \sim \mathcal N(\mu_a,1).
\]
Let
\[
H_t=(A_1,R_1,\dots,A_t,R_t)
\]
denote the interaction history up to time \(t\), and let \(\ALG\) be any
possibly randomized adaptive policy. The final outcome is the full transcript
\[
X=H_T.
\]
We take the loss to be the pseudo-regret
\[
L(M,X)
=
\sum_{t=1}^T \bigl(\mu^\star(M)-\mu_{A_t}\bigr),
\]
where
\[
\mu^\star(M)=\max\{\mu_1,\mu_2\}.
\]
The following theorem provides a lower bound for the minimax quantile for the Gaussian bandit problem.  
\begin{theorem}[Two-armed Gaussian bandit with bounded means]
\label{thm:nonprivate-two-arm}
For every \(\delta\in(0,\tfrac12)\), let
\[
\Lambda_\delta
:=
\log\!\Bigl(\frac{1}{4\delta(1-\delta)}\Bigr).
\]
Then
\[
\mathfrak M_-(\delta)
\ge
\frac{T}{2}
\min\!\left\{
1,\,
\sqrt{\frac{2\Lambda_\delta}{T}}
\right\}.
\]
Consequently, for all such \(\delta\),
\[
\mathfrak M(\delta)
\ge
\frac{T}{2}
\min\!\left\{
1,\,
\sqrt{\frac{2\Lambda_\delta}{T}}
\right\}.
\]
In particular, whenever \(\Lambda_\delta\le T/2\),
\[
\mathfrak M(\delta)
\ge
\sqrt{
\frac{T}{2}\,
\log\!\Bigl(\frac{1}{4\delta(1-\delta)}\Bigr)
}.
\]

\end{theorem}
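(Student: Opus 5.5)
We apply the high-probability interactive Le Cam method, Theorem~\ref{thm:hp-lecam}(b), to a symmetric two-point family in which the two arms swap roles. For a gap parameter \(g\in(0,1]\), set
\[
M_1=(\mathcal N(g,1),\mathcal N(0,1)),\qquad M_2=(\mathcal N(0,1),\mathcal N(g,1)).
\]
Both lie in \(\mathcal M\) because \(g\le 1\).

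\textbf{Separation.} Let \(N_a(X)=\sum_{t=1}^T \mathbf 1\{A_t=a\}\) be the number of pulls of arm \(a\) in the transcript. Then \(L(M_1,X)=g\,N_2(X)\) and \(L(M_2,X)=g\,N_1(X)\). Since \(N_1+N_2=T\) for every transcript \(x\), we get
\[
L(M_1,x)+L(M_2,x)=gT=2\Delta,\qquad \Delta:=\tfrac{gT}{2}.
\]
So the uniform separation condition holds deterministically, for every policy.

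\textbf{Divergence.} For an arbitrary, possibly randomized, adaptive policy \(\ALG\), we use the chain rule for KL along the history \(H_T\). This is the standard divergence decomposition. The policy's conditional action kernels are identical under \(M_1\) and \(M_2\), so they contribute nothing. Hence
\[
\KL(\mathbb P^{M_1,\ALG}\|\mathbb P^{M_2,\ALG})=\sum_{a=1}^2\mathbb E^{M_1,\ALG}[N_a]\,\KL(\mathcal N(\mu_a^{(1)},1)\|\mathcal N(\mu_a^{(2)},1)).
\]
Each arm's means differ by exactly \(g\), so each per-arm KL equals \(g^2/2\). Using \(\mathbb E[N_1+N_2]=T\), the total is exactly \(Tg^2/2\), uniformly over \(\ALG\in\mathcal D\). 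Because of the swap symmetry, no control of the allocation of pulls is needed. This uniformity over all policies is the only genuinely interactive step, and I expect it to be the main point to verify carefully, in particular the measurability and the handling of internal randomization in the chain rule.

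\textbf{Choice of \(g\) and conclusion.} Theorem~\ref{thm:hp-lecam}(b) requires \(Tg^2/2<\Lambda_\delta\), i.e.\ \(g<\sqrt{2\Lambda_\delta/T}\), together with \(g\le 1\); note \(\Lambda_\delta>0\) for \(\delta\in(0,\tfrac12)\).

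\emph{Case \(\sqrt{2\Lambda_\delta/T}>1\).} Take \(g=1\); the condition holds strictly, so \(\mathfrak M_-(\delta)\ge T/2\).

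\emph{Otherwise.} Take any \(g<\sqrt{2\Lambda_\delta/T}\), giving \(\mathfrak M_-(\delta)\ge gT/2\). Letting \(g\uparrow\sqrt{2\Lambda_\delta/T}\) yields \(\mathfrak M_-(\delta)\ge \tfrac T2\sqrt{2\Lambda_\delta/T}\).

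Together the two cases give \(\mathfrak M_-(\delta)\ge \tfrac T2\min\{1,\sqrt{2\Lambda_\delta/T}\}\). The strict-quantile bound then follows from \(\mathfrak M(\delta)\ge\mathfrak M_-(\delta)\) in Theorem~\ref{thm:lower-minimax-quantile-relation}. When \(\Lambda_\delta\le T/2\) the minimum is the square-root term, and \(\tfrac T2\sqrt{2\Lambda_\delta/T}=\sqrt{T\Lambda_\delta/2}\), which is the stated special case.
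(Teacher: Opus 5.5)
Your proposal is correct and follows the paper's proof: the same swap-symmetric two-point Le Cam construction, the same separation $\Delta = gT/2$, the same exact KL value $g^2T/2$ from the adaptive divergence decomposition, and the same limiting choice of $g$. The paper centers the means at $1/2$, using $\bigl(\tfrac12\pm\tfrac g2,\tfrac12\mp\tfrac g2\bigr)$, and scales $g=(1-\eta)\min\{1,\sqrt{2\Lambda_\delta/T}\}$ instead of your case split; both differences are immaterial, since only the gap $g$ enters.
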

\begin{proof}[Proof sketch]
    For the full proof, we refer the reader to the appendix.
\end{proof}

\begin{remark}
    Theorem~\ref{thm:nonprivate-two-arm} shows that the interactive Le Cam's method
yields a \(\delta\)-explicit strict minimax-quantile lower bound for bounded-mean
two-armed Gaussian bandits. The hard-instance construction parallels the bounded
Gaussian constructions used by Lattimore and Szepesvári~\cite[Chs.~13 and
17]{lattimore2020bandit}, although their results are stated as high-probability
tail lower bounds rather than strict minimax-quantile bounds. In the regime
\[
\log\!\Bigl(\frac{1}{4\delta(1-\delta)}\Bigr)\le T/2,
\]
the lower bound has order \(\sqrt{T\log(1/\delta)}\). For smaller \(\delta\)
levels, the bound is truncated at order \(T\), which is the natural scale in the
bounded-gap model: since \(\mu_1,\mu_2\in[0,1]\), the per-round pseudo-regret is
at most one.
\end{remark}

\subsection{\(K\)-armed Gaussian bandits using interactive Fano}
\label{sec:karmed-fano-bandit}

We now give an application of the high-probability
interactive Fano method. The preceding two-armed lower bound is based on a
two-point Le Cam construction. Such a construction captures the dependence on
the horizon \(T\) and the confidence level \(\delta\), but it does not expose
the exploration cost associated with \(K\) possible best arms. The following
result uses a \(K\)-point family, one model for each possible best arm, and
therefore recovers the usual \(\sqrt{K}\) factor in the minimax bandit rate.

Consider the \(K\)-armed Gaussian bandit model
\[
\mathcal M_K
=
\left\{
\bigl(\mathcal N(\mu_1,1),\dots,\mathcal N(\mu_K,1)\bigr):
\mu=(\mu_1,\dots,\mu_K)\in[0,1]^K
\right\}.
\]
At each round \(t=1,\dots,T\), the learner chooses an arm
\[
A_t\in[K],
\]
and then observes
\[
R_t\,|\,A_t=a \sim \mathcal N(\mu_a,1).
\]
Let
\[
H_T=(A_1,R_1,\dots,A_T,R_T)
\]
denote the full interaction transcript. The final outcome is
\[
X=H_T.
\]
For \(M\in\mathcal M_K\), write
\[
\mu^\star(M):=\max_{a\in[K]}\mu_a.
\]
The loss is the pseudo-regret
\[
L(M,H_T)
=
\sum_{t=1}^T
\bigl(\mu^\star(M)-\mu_{A_t}\bigr).
\]

\begin{theorem}[Fano lower bound for \(K\)-armed Gaussian bandits]
\label{thm:karmed-fano-bandit}
Let \(K\ge 2\), \(T\ge 2\), and \(\delta\in(0,1-1/K)\). Define
\[
\Lambda_{K,\delta}
:=
\KL\left(\frac1K\,\middle\|\,1-\delta\right),
\]
where
\[
\KL(p\|q)
:=
p\log\frac{p}{q}
+
(1-p)\log\frac{1-p}{1-q}
\]
is the binary relative entropy. Then the lower minimax quantile for the
\(K\)-armed Gaussian bandit pseudo-regret satisfies
\[
\mathfrak M_-(\delta)
\ge
\frac{T-1}{2}
\min\left\{
1,
\sqrt{
\frac{2K}{T}
\,
\KL\left(\frac1K\,\middle\|\,1-\delta\right)
}
\right\}.
\]
\end{theorem}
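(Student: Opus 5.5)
We apply Theorem~\ref{thm:hp-interactive-fano} with \(D_f=\KL\), a uniform prior on a \(K\)-point family, and an algorithm-dependent reference law.

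\textbf{Hard family.} Fix a gap \(u\in(0,1/2]\). For \(j\in[K]\), let \(M_j\) have means \(\mu_a=\tfrac12+u\,\mathbb 1\{a=j\}\), so that \(M_j\in\mathcal M_K\). Let \(\mu\) be uniform on \(\{M_1,\dots,M_K\}\). Under \(M_j\), every round with \(A_t\neq j\) costs \(u\), so
\[
L(M_j,H_T)=u\,(T-N_j(T)),
\]
where \(N_j(T)\) is the number of pulls of arm \(j\). Take
\[
\Delta=\frac{u(T-1)}{2}.
\]
Then \(L(M_j,H_T)\le\Delta\) forces \(N_j(T)\ge (T+1)/2\). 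Since \(\sum_j N_j(T)=T\), at most one arm can exceed \(T/2\) pulls. Hence for every transcript, \(L(M_j,H_T)\le\Delta\) holds for at most one \(j\). (The \(T-1\) in the statement comes from this strictness.)

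\textbf{Reference law and success probability.} For each \(\ALG\), let \(Q_\ALG=\mathbb P^{M_0,\ALG}\), where \(M_0\) is the null model with all means \(\tfrac12\). Because \(M\sim\mu\) is drawn independently of \(X\sim Q_\ALG\) and at most one index succeeds on each transcript,
\[
\bar\rho_{\Delta,Q_\ALG}\le \frac1K .
\]

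\textbf{KL bound.} By the divergence decomposition for bandit transcripts,
\[
\KL(\mathbb P^{M_j,\ALG}\|Q_\ALG)=\mathbb E^{M_j,\ALG}[N_j(T)]\,\frac{u^2}{2}
\]
if we condition on \(M_j\) first. To average over \(j\) conveniently, reverse the direction and use \(\KL(Q_\ALG\|\mathbb P^{M_j,\ALG})=\mathbb E^{M_0,\ALG}[N_j]\,u^2/2\). The Gaussian KL is symmetric per pull, but the expectation is under the reference, which is what we need. Summing over \(j\) gives \(\mathbb E_{M\sim\mu}[\cdot]=Tu^2/(2K)\).

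This is the main obstacle. Theorem~\ref{thm:hp-interactive-fano} needs \(\mathbb E_\mu\KL(\mathbb P^{M,\ALG}\|Q_\ALG)\), where the expectation is under \(M_j\), not \(M_0\). I would handle this in one of two ways.
\begin{enumerate}[(i)]
\item Apply the Fano argument with the divergence direction swapped. This amounts to the \(f\)-divergence \(f(t)=-\log t\) (reverse KL), which is itself an \(f\)-divergence, so Theorem~\ref{thm:hp-interactive-fano} covers it. The reference-side quantity \(d_{f,\epsilon}\) then becomes \(\KL(\Bern(p)\|\Bern(1-\epsilon))\) evaluated at \(p=1/K\), matching \(\KL(\tfrac1K\|1-\delta)\) in the statement.
\item Alternatively, choose \(Q_\ALG\) as the mixture \(\frac1K\sum_j\mathbb P^{M_j,\ALG}\), and bound the mutual information by \(\frac1K\sum_j\KL(\mathbb P^{M_j,\ALG}\|\mathbb P^{M_0,\ALG})\).
\end{enumerate}
I would pursue (i), since the form \(\KL(1/K\|1-\delta)\) indicates the reverse direction.

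\textbf{Conclusion.} By monotonicity of \(d_{f,\epsilon}\) in \(p\) for \(p\le 1-\epsilon\), the condition
\[
\frac{Tu^2}{2K}<\KL\Bigl(\frac1K\,\Big\|\,1-\delta\Bigr)
\]
(with \(\delta<1-1/K\) ensuring \(1/K\le 1-\delta\)) gives \(\epsilon^\star>\delta\). Hence \(\mathfrak M_-(\delta)\ge\Delta\). Take
\[
u=\min\Bigl\{\tfrac12,\ \sqrt{2K\Lambda_{K,\delta}/T}\Bigr\}\ \text{(up to strictness)}.
\]
A limiting argument with \(u\uparrow\) this value, using monotonicity of \(\mathfrak M_-\) in the threshold, yields \(\Delta=\frac{T-1}{2}\min\{\cdot\}\). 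To reach the constant \(1\) rather than \(\tfrac12\) inside the minimum, I would center the means at \(0\) with gap \(u\le1\) instead of at \(\tfrac12\).
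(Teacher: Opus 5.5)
Your proposal is correct and follows the paper's proof: reverse KL via $f(t)=-\log t$, the reference law $Q_\ALG=\mathbb P^{M_0,\ALG}$ from the all-zero null model (after your recentring, which lets the gap range up to $1$), the disjointness bound $\bar\rho_{\Delta,Q_\ALG}\le 1/K$, and the adaptive KL decomposition giving $Tu^2/(2K)$. The only difference is how the slack is handled: you impose the strict condition at level $\delta$ and use continuity of $q\mapsto\KL(1/K\|q)$ to obtain $\epsilon^\star>\delta$ (spell that continuity step out), whereas the paper fixes $\epsilon\in(\delta,1-1/K)$ first and lets $\epsilon\downarrow\delta$ at the end; the two are equivalent.
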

\begin{proof}[Proof sketch]
The proof applies Theorem~\ref{thm:hp-interactive-fano} to the \(K\) alternatives
in which exactly one arm has mean \(\alpha\), using the all-zero bandit
transcript law \(Q_\ALG=P_0^\ALG\) as the reference law and reverse KL as the
\(f\)-divergence. The reference success probability is at most \(1/K\), while
the adaptive KL decomposition gives
\[
\frac1K\sum_{j=1}^K \KL(P_0^\ALG\|P_j^\ALG)
=
\frac{\alpha^2T}{2K}.
\]
Optimizing over \(\alpha\) and letting the auxiliary Fano level
\(\epsilon\downarrow\delta\) gives the claim. The full proof is given in the appendix.
\end{proof}

\begin{remark}[Scaling of the \(K\)-armed Gaussian bandit lower bound]
\label{rem:fano-kl-confidence-term}
When the minimum in Theorem~\ref{thm:karmed-fano-bandit} is attained by its
second argument, the lower bound is
\[
\frac{T-1}{2}
\sqrt{
\frac{2K}{T}
\KL\left(\frac1K\,\middle\|\,1-\delta\right)
},
\]
which has order
\[
\sqrt{
KT\,
\KL\left(\frac1K\,\middle\|\,1-\delta\right)
}.
\]
The binary relative entropy term satisfies
\[
K\KL\left(\frac1K\,\middle\|\,1-\delta\right)
=
(K-1)\log\left(\frac1\delta\right)
-\log K
-\log(1-\delta)
+
(K-1)\log\left(1-\frac1K\right).
\]
Hence, whenever
\[
(K-1)\log(1/\delta)\gg \log K,
\]
this lower bound has leading order
\[
\sqrt{(K-1)T\log(1/\delta)}.
\]

For \(K=2\),
\[
\KL\left(\frac12\,\middle\|\,1-\delta\right)
=
\frac12
\log\left(\frac{1}{4\delta(1-\delta)}\right).
\]
Thus the bound becomes
\[
\sqrt{
T\log\left(\frac{1}{4\delta(1-\delta)}\right)
}
\]
up to universal constants, matching the confidence dependence of the
two-point Le Cam bound in Theorem \ref{thm:nonprivate-two-arm}.
\end{remark}

\subsection{Gaussian mean estimation under coordinatewise Gaussian MI-privatization}
\label{sec:private-mean}

We now return to Gaussian mean estimation, but with MI privacy imposed on the
sample vector. Let
\[
\mathcal M=\{\theta:\theta\in\mathbb R\},
\qquad
X_1,\dots,X_n \stackrel{\mathrm{i.i.d.}}{\sim} \mathcal N(\theta,1).
\]
The sensitive object is
\[
S=(X_1,\dots,X_n)\in\mathbb R^n.
\]

Fix a privacy budget \(\varepsilon>0\). We denote by
\(\ALG^{\mathrm{GMI}}\) a private estimator whose released output
is \(Y\). The estimator is admissible if its released output satisfies the MI constraint
\[
\sup_{\theta\in\mathcal M}
I_{P^{\theta,\ALG^{\mathrm{GMI}}}}(S;Y)
\le \varepsilon.
\]

Under coordinatewise Gaussian privatization, the raw sample is first passed
through the privacy mechanism
\[
\widetilde Y_i=X_i+Z_i,
\qquad
Z_i \stackrel{\mathrm{i.i.d.}}{\sim}\mathcal N(0,\sigma^2),
\qquad
i=1,\dots,n,
\]
where the privacy noises \(Z_1,\dots,Z_n\) are independent of the sample. The
estimator is constrained to observe only the privatized sample
\(\widetilde Y_{1:n}\), and outputs
\[
\hat\theta=\ALG(\widetilde Y_{1:n}).
\]
Thus, the raw sample \(X_{1:n}\) is not directly available to the estimator.
The released output is
\[
Y=(\widetilde Y_{1:n},\hat\theta),
\]
and the loss is the squared-error
\[
L(\theta,Y)=(\hat\theta-\theta)^2.
\]

In this Gaussian mechanism, the MI constraint imposes the noise floor
\[
\sigma^2\ge \sigma_{\min}^2(\varepsilon,n),
\qquad
\sigma_{\min}^2(\varepsilon,n)
=
\frac{1}{e^{2\varepsilon/n}-1}.
\]
For each \(\theta\in\mathcal M\), we write
\[
Y\sim P_Y^{\theta,\ALG^{\mathrm{GMI}}},
\]
where \(P_Y^{\theta,\ALG^{\mathrm{GMI}}}\) is the law induced by the raw
sample, the Gaussian privacy mechanism, and the estimator acting only on the
privatized sample.

The following theorem provides a minimax-quantile lower bound for Gaussian mean
estimation under squared-error loss with coordinatewise Gaussian
MI privacy.

\begin{theorem}[Gaussian mean estimation under coordinatewise Gaussian MI privacy]
\label{thm:private-mean}
For every \(\delta\in(0,\tfrac12)\),
\[
\mathfrak M^{\mathrm{GMI}}_-(\delta;\varepsilon)
\ge
\frac{1+\sigma_{\min}^2(\varepsilon,n)}{2n}\,
\log\!\Bigl(\frac{1}{4\delta(1-\delta)}\Bigr),
\]
where
\[
\sigma_{\min}^2(\varepsilon,n)
=
\frac{1}{e^{2\varepsilon/n}-1}.
\]
Consequently,
\[
\mathfrak M^{\mathrm{GMI}}(\delta;\varepsilon)
\ge
\frac{1+\sigma_{\min}^2(\varepsilon,n)}{2n}\,
\log\!\Bigl(\frac{1}{4\delta(1-\delta)}\Bigr).
\]
\end{theorem}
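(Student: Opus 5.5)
The plan is to apply Theorem~\ref{thm:gaussian-private-meta} with $d=n$. To do so we verify Assumptions~\ref{ass:noise-floor} and~\ref{ass:two-point-gaussian} with $a=2n$ and $\phi(u)=u^2$. Plugging in gives
\[
\phi\!\left(\sqrt{\tfrac{1+\sigma_{\min}^2}{2n}\Lambda_\delta}\right)=\frac{1+\sigma_{\min}^2(\varepsilon,n)}{2n}\Lambda_\delta,
\]
which is the claimed bound on $\mathfrak M^{\mathrm{GMI}}_-(\delta;\varepsilon)$. The strict-quantile statement then follows from Corollary~\ref{cor:private-lower-minimax-quantile-relation}, since $\mathfrak M^{\mathrm{GMI}}\ge\mathfrak M^{\mathrm{GMI}}_-$. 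That inequality holds for the restricted class by the same argument.

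\emph{Noise floor.} Since $Y=(\widetilde Y_{1:n},\hat\theta)$ and $\hat\theta$ is a (possibly randomized) function of $\widetilde Y_{1:n}$ with randomness independent of $S$, we have $I(S;Y)=I(S;\widetilde Y_{1:n})$. Under model $\theta$ the coordinates are independent Gaussian channels $X_i\sim\mathcal N(\theta,1)$ to $X_i+Z_i$. Each contributes $\tfrac12\log(1+1/\sigma^2)$, so the total is $\tfrac n2\log(1+1/\sigma^2)$, and this value does not depend on $\theta$. The constraint $\tfrac n2\log(1+\sigma^{-2})\le\varepsilon$ is equivalent to $\sigma^2\ge 1/(e^{2\varepsilon/n}-1)$, which is Assumption~\ref{ass:noise-floor}.

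\emph{Two-point family.} For $u>0$, take $\theta_1=0$ and $\theta_2=2u$. Separation follows from $(\hat\theta-\theta_1)^2+(\hat\theta-\theta_2)^2\ge (\theta_2-\theta_1)^2/2=2u^2$, so $\phi(u)=u^2$, which is nondecreasing and continuous. For the KL bound, note that the map $\widetilde Y_{1:n}\mapsto Y$ is the same kernel under both models. By data processing (in fact with equality, since $\widetilde Y$ is a coordinate of $Y$), $\KL(P^{\theta_1}_Y\|P^{\theta_2}_Y)=\KL(\mathcal N(\theta_1,1+\sigma^2)^{\otimes n}\|\mathcal N(\theta_2,1+\sigma^2)^{\otimes n})=n(2u)^2/(2(1+\sigma^2))=2nu^2/(1+\sigma^2)$, giving $a=2n$.

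\emph{Main obstacle.} The delicate step is bookkeeping in Theorem~\ref{thm:gaussian-private-meta}. Its KL bound is stated in terms of the actual $\sigma^2$ of the rule, while the conclusion uses $\sigma_{\min}^2$. Since $u^2/(1+\sigma^2)\le u^2/(1+\sigma_{\min}^2)$ for every feasible rule, the supremum of the KL over $\mathcal D_\varepsilon^{\mathrm{GMI}}$ is at most $2nu^2/(1+\sigma_{\min}^2)$. Choosing $u$ slightly below $\sqrt{(1+\sigma_{\min}^2)\Lambda_\delta/(2n)}$ makes this strictly less than $\Lambda_\delta$. Corollary~\ref{cor:private-hp-lecam}(b) then gives $\mathfrak M_-^{\mathrm{GMI}}\ge u^2$, and letting $u$ increase to that value gives the bound, using the continuity of $\phi$. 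If the meta-theorem is taken as already handling this limit, the proof reduces to the two verifications above.
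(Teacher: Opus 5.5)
Your proposal is correct and follows essentially the same route as the paper. You derive the noise floor $\sigma^2\ge 1/(e^{2\varepsilon/n}-1)$ from $I(S;Y)=I(X_{1:n};\widetilde Y_{1:n})=\tfrac n2\log(1+\sigma^{-2})$, check Assumption~\ref{ass:two-point-gaussian} with $\phi(u)=u^2$ and $a=2n$, and then invoke Theorem~\ref{thm:gaussian-private-meta} and Corollary~\ref{cor:private-lower-minimax-quantile-relation}; the only difference is your hard pair $\{0,2u\}$ in place of the paper's $\{u,-u\}$, a translation that yields the same separation and KL constants.
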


\begin{proof}[Proof sketch]
We verify the hypotheses of Theorem~\ref{thm:gaussian-private-meta} for the
Gaussian mean-estimation model under squared-error loss. The MI constraint
yields the noise floor
\(\sigma^2\ge\sigma_{\min}^2(\varepsilon,n)\), while the standard two-point
Gaussian construction verifies the required separation and KL conditions. The
full proof is given in the appendix.
\end{proof}

\begin{remark}
Theorem~\ref{thm:private-mean} is the squared-error counterpart of
Theorem~\ref{thm:nonprivate-mean} under coordinatewise Gaussian
MI privacy. Relative to the non-private setting, the lower bound is
inflated by the factor \(1+\sigma_{\min}^2(\varepsilon,n)\), where
\[
\sigma_{\min}^2(\varepsilon,n)
=
\frac{1}{e^{2\varepsilon/n}-1}.
\]
Since \(\sigma_{\min}^2(\varepsilon,n)\) is decreasing in the privacy budget
\(\varepsilon\), stronger privacy requirements (smaller \(\varepsilon\))
necessitate larger privatization noise and therefore yield larger minimax
quantiles. Thus, the degradation in statistical performance is governed
precisely by the variance inflation induced by the Gaussian privacy mechanism.
\end{remark}

\subsection{Two-armed Gaussian bandits under coordinatewise Gaussian MI privacy}
\label{sec:private-bandit}

We return to the two-armed Gaussian bandit, but now impose MI privacy on the
reward sequence observed during the interaction. As in the classical case, we
work with bounded means. Let
\[
\mathcal M
=
\{(\mu_1,\mu_2):\mu_1,\mu_2\in[0,1]\},
\qquad
R_t\,|\,A_t=a \sim \mathcal N(\mu_a,1).
\]
The sensitive object is the realized raw reward sequence
\[
S=(R_1,\dots,R_T)\in\mathbb R^T.
\]

Fix a privacy budget \(\varepsilon>0\). We denote by
\(\ALG^{\mathrm{GMI}}\) an effective private bandit algorithm whose released
transcript is \(Y\). The algorithm is admissible if its released transcript
satisfies the uniform MI constraint
\[
\sup_{M\in\mathcal M}
I_{P^{M,\ALG^{\mathrm{GMI}}}}(S;Y)
\le \varepsilon.
\]

Under coordinatewise Gaussian privatization, the raw reward \(R_t\) is not
directly observed by the learning policy. Instead, after the learner chooses an
action \(A_t\), the environment generates the raw reward \(R_t\), and the
privacy mechanism releases only
\[
\widetilde Y_t = R_t + Z_t,
\qquad
Z_t \stackrel{\mathrm{i.i.d.}}{\sim} \mathcal N(0,\sigma^2),
\qquad
t=1,\dots,T,
\]
where the privacy noises are independent of the rewards and of the algorithm's
internal randomization. 

The policy component of \(\ALG^{\mathrm{GMI}}\) is a sequence of stochastic
kernels
\[
\pi_t:\widetilde{\mathcal H}_{t-1}\to\Delta(\{1,2\}),
\qquad t=1,\dots,T,
\]
where \(\widetilde{\mathcal H}_{t-1}\) is the space of privatized histories up
to time \(t-1\). Given the realized privatized history
\[
\widetilde H_{t-1}
=
(A_1,\widetilde Y_1,\dots,A_{t-1},\widetilde Y_{t-1}),
\]
the action is sampled according to
\[
A_t \sim \pi_t(\cdot \mid \widetilde H_{t-1}).
\]
Thus, the policy never observes the raw rewards \(R_1,\dots,R_T\) and it observes
only their privatized versions.

The released transcript is
\[
Y=(A_1,\widetilde Y_1,\dots,A_T,\widetilde Y_T).
\]

In this Gaussian mechanism, the MI constraint imposes the noise floor
\[
\sigma^2\ge \sigma_{\min}^2(\varepsilon,T),
\qquad
\sigma_{\min}^2(\varepsilon,T)
=
\frac{1}{e^{2\varepsilon/T}-1}.
\]
For each \(M\in\mathcal M\), we write
\[
Y\sim P_Y^{M,\ALG^{\mathrm{GMI}}},
\]
where \(P_Y^{M,\ALG^{\mathrm{GMI}}}\) is the released-transcript law induced
by the model, the Gaussian privacy mechanism, and the policy acting only on the
privatized history.

The loss is the pseudo-regret
\[
L(M,Y)
=
\sum_{t=1}^T \bigl(\mu^\star(M)-\mu_{A_t}\bigr),
\qquad
\mu^\star(M)=\max\{\mu_1,\mu_2\}.
\]
Although the loss is written as a function of \(Y\), it depends on the released
transcript only through the action sequence \(A_1,\dots,A_T\). The raw rewards
are used only to generate the privatized observations and to define the
sensitive object protected by the MI constraint.

The following theorem provides a minimax-quantile lower bound for the
two-armed Gaussian bandit problem under coordinatewise Gaussian
MI privacy.
\begin{theorem}[Two-armed Gaussian bandits under coordinatewise Gaussian MI privacy]
\label{thm:private-bandit}
For every \(\delta\in(0,\tfrac12)\), let
\[
\Lambda_\delta
:=
\log\!\Bigl(\frac{1}{4\delta(1-\delta)}\Bigr),
\]
and
\[
\sigma_{\min}^2(\varepsilon,T)
=
\frac{1}{e^{2\varepsilon/T}-1}.
\]
Then
\[
\mathfrak M^{\mathrm{GMI}}_-(\delta;\varepsilon)
\ge
\frac{T}{2}
\min\!\left\{
1,\,
\sqrt{
\frac{2(1+\sigma_{\min}^2(\varepsilon,T))\Lambda_\delta}{T}
}
\right\}.
\]
Consequently,
\[
\mathfrak M^{\mathrm{GMI}}(\delta;\varepsilon)
\ge
\frac{T}{2}
\min\!\left\{
1,\,
\sqrt{
\frac{2(1+\sigma_{\min}^2(\varepsilon,T))\Lambda_\delta}{T}
}
\right\}.
\]
In particular, whenever
\[
(1+\sigma_{\min}^2(\varepsilon,T))\Lambda_\delta\le \frac{T}{2},
\]
we have
\[
\mathfrak M^{\mathrm{GMI}}_-(\delta;\varepsilon)
\ge
\sqrt{
\frac{(1+\sigma_{\min}^2(\varepsilon,T))T}{2}
\log\!\Bigl(\frac{1}{4\delta(1-\delta)}\Bigr)
},
\]
and the same lower bound holds for \(\mathfrak M^{\mathrm{GMI}}(\delta;\varepsilon)\).
\end{theorem}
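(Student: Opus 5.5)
The plan is to verify Assumptions~\ref{ass:noise-floor} and \ref{ass:two-point-gaussian} for the private two-armed bandit with $a$ and $\phi$ chosen appropriately, and then invoke Theorem~\ref{thm:gaussian-private-meta}. The bound has a truncation at $T/2$ coming from the bounded-mean constraint, so it is cleaner to apply Corollary~\ref{cor:private-hp-lecam}(b) directly with an explicit choice of the gap. The strict-quantile statement then follows from Corollary~\ref{cor:private-lower-minimax-quantile-relation}, since $\mathfrak M^{\mathrm{GMI}}\ge\mathfrak M^{\mathrm{GMI}}_-$.

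\emph{Noise floor.} Fix an admissible $\ALG^{\mathrm{GMI}}$ with noise level $\sigma^2$. I would lower bound $I(S;Y)$ by chain rule over rounds. Conditionally on the privatized past and $A_t$, the raw reward $R_t$ is $\mathcal N(\mu_{A_t},1)$ and independent of the past noise. Hence $I(R_t;\widetilde Y_t\mid \widetilde H_{t-1},A_t)=\tfrac12\log(1+1/\sigma^2)$. Since $A_t$ is a function of $\widetilde H_{t-1}$ and internal randomness, the per-round terms sum to give
\[
I(S;Y)\ge I(S;\widetilde Y_{1:T}\mid A_{1:T})\text{-type bound}=\tfrac T2\log(1+\sigma^{-2}),
\]
more precisely $I(S;Y)=\sum_t I(S;A_t,\widetilde Y_t\mid \widetilde H_{t-1})\ge \sum_t I(R_t;\widetilde Y_t\mid\widetilde H_{t-1},A_t)$. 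The constraint $I(S;Y)\le\varepsilon$ then gives $\sigma^2\ge 1/(e^{2\varepsilon/T}-1)=\sigma^2_{\min}(\varepsilon,T)$. This step requires care: the other coordinates $R_s$, $s\neq t$, must not spoil the conditional independence, which follows because $\widetilde Y_t$ depends on $S$ only through $R_t$.

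\emph{Two-point construction.} Take $M_1=(\tfrac12+\tfrac u2,\tfrac12)$ and $M_2=(\tfrac12+\tfrac u2,\tfrac12+u)$ with $u\in(0,\tfrac12]$, Lattimore--Szepesv\'ari style. Alternatively, use $M_1=(\tfrac12+\alpha,\tfrac12-\alpha)$ and $M_2$ with the roles swapped, so the arms differ by $2\alpha$. With $N_i$ the number of pulls of arm $i$, the swapped choice gives $L(M_1,Y)=2\alpha N_2$ and $L(M_2,Y)=2\alpha N_1$, so $L(M_1,y)+L(M_2,y)=2\alpha T$ for every $y$. This gives the separation $\Delta=\alpha T$ with $\alpha\le\tfrac12$.

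For the KL bound, note that under privatization each observation is $\mathcal N(\mu_{A_t},1+\sigma^2)$ given $A_t$. The adaptive divergence decomposition gives
\[
\KL(P^{M_1}\|P^{M_2})=\sum_i\mathbb E_{M_1}[N_i]\frac{(2\alpha)^2}{2(1+\sigma^2)}=\frac{2\alpha^2T}{1+\sigma^2}\le\frac{2\alpha^2T}{1+\sigma^2_{\min}}.
\]
Hence Corollary~\ref{cor:private-hp-lecam}(b) applies whenever $2\alpha^2T<(1+\sigma^2_{\min})\Lambda_\delta$.

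\emph{Optimizing the gap.} Choose $\alpha_\eta=\min\{\tfrac12,\sqrt{(1+\sigma^2_{\min})\Lambda_\delta/(2T)}(1-\eta)\}$. This gives $\mathfrak M^{\mathrm{GMI}}_-\ge \alpha_\eta T$. Letting $\eta\downarrow0$ yields $T\min\{\tfrac12,\sqrt{(1+\sigma^2_{\min})\Lambda_\delta/(2T)}\}=\tfrac T2\min\{1,\sqrt{2(1+\sigma^2_{\min})\Lambda_\delta/T}\}$. If the minimum is $1/2$, no $\eta$ is needed, since the KL condition is then strict. The final regime statement is the case where the square-root term is at most one, which gives $\sqrt{(1+\sigma^2_{\min})T\Lambda_\delta/2}$.

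The main obstacle is the noise-floor step, i.e.\ justifying the chain-rule lower bound on $I(S;Y)$ under adaptivity. The remaining steps are standard.
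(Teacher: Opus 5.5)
Your proposal is correct and follows the paper's proof essentially step for step: the same chain-rule noise floor $I(S;Y)\ge\sum_t I(R_t;\widetilde Y_t\mid \widetilde H_{t-1},A_t)=\tfrac T2\log(1+\sigma^{-2})$, the same symmetric swapped pair (your $\alpha$ is the paper's $g/2$), the same adaptive KL decomposition with privatized variance $1+\sigma^2$, and a direct application of Corollary~\ref{cor:private-hp-lecam}(b) with an $\eta$-slack. One small note: passing from $R_{1:T}$ to $R_t$ in the noise-floor step uses only monotonicity of mutual information, not conditional independence. The per-round value $\tfrac12\log(1+\sigma^{-2})$ uses only that $R_t$ is drawn fresh given $(\widetilde H_{t-1},A_t)$, so the obstacle you flag is milder than you suggest.
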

\begin{proof}[Proof sketch]
We verify the conditions of Corollary~\ref{cor:private-hp-lecam}(b) for the
MI-private Gaussian bandit problem. The MI constraint yields the noise floor
\(\sigma^2\ge \sigma_{\min}^2(\varepsilon,T)\), while the symmetric two-point
bandit construction verifies the required separation and KL
indistinguishability conditions. The full proof is given in the appendix.
\end{proof}

\begin{remark}
   Theorem~\ref{thm:private-bandit} is the coordinatewise Gaussian MI-privatized counterpart of
Theorem~\ref{thm:nonprivate-two-arm}. Privacy enters through the Gaussian
variance-inflation factor \(1+\sigma_{\min}^2(\varepsilon,T)\), now propagated
through the horizon \(T\). In the regime
\[
(1+\sigma_{\min}^2(\varepsilon,T))
\log\!\Bigl(\frac{1}{4\delta(1-\delta)}\Bigr)
\le \frac{T}{2},
\]
the lower bound has order
\(
\sqrt{
(1+\sigma_{\min}^2(\varepsilon,T))T\log(1/\delta)
}.
\)
For smaller \(\delta\) or stronger privacy constraints (larger \(\sigma_{\min}\)), the lower bound
is truncated at order \(T\), which is the natural scale in the bounded-mean
model: since \(\mu_1,\mu_2\in[0,1]\), the per-round pseudo-regret is at most one. 
\end{remark}

\subsection{Comparison of the bounds}
\label{sec:comparison}

The examples above illustrate two complementary uses of the quantile framework:
the Le Cam argument gives two-point lower bounds for estimation, two-armed
bandits, and their coordinatewise Gaussian MI-private counterparts, while the Fano argument captures
the multi-arm exploration in the \(K\)-armed bandit example. In the
classical Gaussian mean-estimation example with squared-error, the lower bound
has scale
\[
\frac{1}{n}
\log\!\Bigl(\frac{1}{4\delta(1-\delta)}\Bigr).
\]
In contrast, the two-armed bandit example has scale
\[
\min\!\left\{
T,\,
\sqrt{
T\log\!\Bigl(\frac{1}{4\delta(1-\delta)}\Bigr)
}
\right\}.
\]
Thus the estimation and bandit examples exhibit a distinction between the
non-interactive setting, where the squared-error quantile scales as
\(\log(1/\delta)/n\), and the sequential setting, where the high-confidence regret
scale is \(\sqrt{T\log(1/\delta)}\) until it reaches the bounded-gap ceiling of
order \(T\).

The private examples have the same structure, but with an additional
variance-inflation factor induced by the Gaussian privacy mechanism. For
mean estimation, Theorem~\ref{thm:private-mean} replaces the non-private
factor \(1/n\) by
\[
\frac{1+\sigma_{\min}^2(\varepsilon,n)}{n},
\qquad
\sigma_{\min}^2(\varepsilon,n)
=
\frac{1}{e^{2\varepsilon/n}-1}.
\]
Thus, under squared-error, the privacy mechanism inflates the lower bound multiplicatively
by
\[
1+\sigma_{\min}^2(\varepsilon,n).
\]
For the two-armed bandit, Theorem~\ref{thm:private-bandit} similarly replaces
the non-private variance scale by
\[
1+\sigma_{\min}^2(\varepsilon,T),
\qquad
\sigma_{\min}^2(\varepsilon,T)
=
\frac{1}{e^{2\varepsilon/T}-1},
\]
leading to the lower-bound scale
\[
\min\!\left\{
T,\,
\sqrt{
(1+\sigma_{\min}^2(\varepsilon,T))T
\log\!\Bigl(\frac{1}{4\delta(1-\delta)}\Bigr)
}
\right\}.
\]
The truncation by \(T\) in the bandit bounds reflects the bounded-mean
formulation: since \(\mu_1,\mu_2\in[0,1]\), the per-round pseudo-regret is at
most one.

As \(\varepsilon\to\infty\), one has
\[
\sigma_{\min}^2(\varepsilon,d)\to 0,
\]
so the private lower bounds reduce to their non-private counterparts. As
\(\varepsilon\) decreases, the required privatization noise increases, and the
corresponding minimax quantile lower bounds worsen. These examples therefore highlight the role of the Gaussian-privatization
template and the underlying Le Cam argument: once the
privacy-induced noise floor and the two-point KL bound are identified, the
strict minimax-quantile lower bound follows in a unified way across both
estimation and interactive bandit problems.

\subsection{Upper bounds and related achievability results}
\label{sec:upper-bounds-comparison}

We now compare the preceding lower bounds with known upper bounds. The
comparisons in this section should be interpreted with some care. Only the
Gaussian mean-estimation comparison is a direct match of model, loss, and
quantile criterion. The bandit and privacy results discussed below are related
achievability results: they establish the high-confidence rates in
nearby problems, but they differ from our theorems either in the reward class, the
regret notion, or the privacy definition. In particular, any upper bound stated in the weak-tail form
\[
\sup_{M\in\mathcal M}\mathbb P_{M,\mathrm{ALG}}\{L(M,X)\ge r_\delta\}\le \delta
\]
also implies the strict-tail bound
\[
\sup_{M\in\mathcal M}\mathbb P_{M,\mathrm{ALG}}\{L(M,X)> r_\delta\}\le \delta,
\]
since \(\{L(M,X)>r_\delta\}\subseteq\{L(M,X)\ge r_\delta\}\), and hence gives the strict minimax-quantile upper bound \(M(\delta)\le r_\delta\).

For classical Gaussian mean estimation under squared-error, the lower bound in
Theorem~\ref{thm:nonprivate-mean} is tight up to universal constants. Ma,
Verchand and Samworth~\cite[Prop.~10]{ma2024high} show that, for
\(X_i\sim\mathcal N_d(\theta,\Sigma)\), the minimax quantile under squared
Euclidean loss has order
\[
\frac{\operatorname{tr}(\Sigma)}{n}
+
\frac{\|\Sigma\|_{\mathrm{op}}\log(1/\delta)}{n}.
\]
In the one-dimensional case \(\Sigma=1\), this becomes
\[
\mathfrak M(\delta)\asymp \frac{\log(1/\delta)}{n},
\]
which matches Theorem~\ref{thm:nonprivate-mean}. The matching upper bound is
achieved by the sample mean. This is the cleanest comparison in the present
paper, since the model, loss, and quantile criterion coincide.

For \(K\)-armed bandits, the situation is more nuanced. Our lower bound is stated
for bounded-mean Gaussian stochastic bandits and pseudo-regret. The cleanest
matching high-probability upper bounds with
\(\sqrt{KT\log(1/\delta)}\) dependence are available in nearby bounded-reward
stochastic formulations. In particular, fixed-confidence EXP3-type algorithms
satisfy high-probability regret bounds of order
\[
O\!\left((1+\log K)\sqrt{KT\log(K/\delta)}\right),
\]
which reduces to \(O(\sqrt{T\log(1/\delta)})\) for \(K=2\)
\cite{seldin2013evaluation}. Thus, for bounded-reward two-armed stochastic
bandits, the \(\sqrt{T\log(1/\delta)}\) rate is achievable up to constants.

For bounded-mean Gaussian rewards, however, the rewards themselves are
unbounded, so bounded-reward upper bounds do not apply directly. A directly
relevant stochastic distributional-regret result is due to Lee and
Oh~\cite{lee2026unified}, who obtain, for sub-Gaussian stochastic MAB with
bounded value scale, the uniform-in-\(\delta\) bound
\[
O\!\left(\sqrt{KT}\log(1/\delta)\right)
\]
together with \(O(\sqrt{KT})\) expected regret. This is a 
distributional-regret upper bound, but its confidence dependence is
\(\log(1/\delta)\), rather than \(\sqrt{\log(1/\delta)}\). Thus the
bounded-reward stochastic theory is rate-consistent with our two-arm lower
bound, while for bounded-mean Gaussian rewards we are not aware of a classical
UCB-type algorithm/theorem giving the sharp
\(O(\sqrt{KT\log(1/\delta)})\) strict quantile upper bound.

Aliakbarpour et al.~\cite{aliakbarpour2025high} also study high-probability
minimax quantiles under privacy constraints, but in the heterogeneous local
differential privacy (LDP) model rather than under MI privacy with Gaussian
privatization. Their mean-estimation setting is bounded rather than Gaussian, and the comparison is therefore qualitative. For one-dimensional
mean estimation with user-specific privacy levels
\(\varepsilon_1,\dots,\varepsilon_n\), they show that the optimal
high-probability squared-error scale is, up to universal constants,
\[
\min\!\left\{
\frac{\log(1/\delta)}{\sum_{i=1}^n \varepsilon_i^2},
1
\right\},
\]
under the high-privacy condition \(\varepsilon_i\le 1\). In particular, their
upper bounds are achieved by heterogeneous LDP mechanisms, and their matching
lower bound shows that this rate is minimax-optimal up to constants.

In our MI-private Gaussian mean-estimation bound, privacy enters through the
variance-inflation factor
\[
1+\sigma_{\min}^2(\varepsilon,n),
\qquad
\sigma_{\min}^2(\varepsilon,n)
=
\frac{1}{e^{2\varepsilon/n}-1}.
\]
Thus, stronger MI privacy, corresponding to smaller \(\varepsilon\), increases
the required Gaussian noise and worsens the minimax-quantile lower bound. In
heterogeneous LDP, stronger privacy corresponds to smaller local parameters
\(\varepsilon_i\), which decreases the aggregate privacy strength
\(\sum_i \varepsilon_i^2\) and worsens the optimal high-probability rate. Thus,
both theories exhibit the same qualitative phenomenon: stronger privacy leads
to worse high-confidence estimation performance, although the privacy notions
and parameters are not directly comparable.

For private bandits, the comparison is even more indirect. Existing work on
differentially private and locally differentially private bandits studies regret
under DP, LDP, heavy-tailed private rewards, and
concentrated or interactive privacy constraints
\cite{tossou2016algorithms,basu2019differential,tao2022optimal,azize2024concentrated}.
These results typically provide expected regret bounds or 
high-probability regret bounds under DP/LDP-type privacy constraints. They do not
give, in the same model, a matching strict minimax-quantile upper bound for the
MI-private Gaussian mechanism studied here. Consequently,
Theorem~\ref{thm:private-bandit} should be viewed as a lower bound for the
MI-private quantile formulation rather than as a result currently matching an existing private bandit algorithmic upper bound under the same privacy notion.

%

\section{Conclusion and Future Work}
\label{sec:conclusion}

We developed a \(\delta\)-explicit minimax-quantile theory for interactive statistical decision making. The main contributions were a quantile-to-expectation conversion, a structural relation between strict and lower minimax quantiles, and high-probability interactive versions of Fano's and Le Cam's methods. Together, these results provide a converse toolkit for tail-sensitive lower bounds under interaction.

We then showed that mutual-information-based private decision problems can be modeled as ISDM instances with a privacy-constrained admissible class. Once this constrained-class formulation is in place, the ISDM quantile machinery transfers directly.

For coordinatewise Gaussian MI-privatization, we derived a two-point template that isolates the role of privacy through a noise-floor condition and a
KL-based indistinguishability condition. Instantiating this template yielded minimax-quantile lower bounds for Gaussian mean estimation and two-armed Gaussian bandits under coordinatewise Gaussian MI privacy. We also derived a
classical \(K\)-armed Gaussian bandit application showing that the interactive Fano theorem captures multi-arm scaling beyond two-point Le Cam
arguments.

Several directions remain open. First, our treatment of privacy focused on mutual-information constraints and Gaussian privatization; extending the same minimax-quantile program to other privacy notions and other mechanism classes is a natural next step. Next, on the interactive side, it would be valuable to connect the present strict-quantile approach more directly to DEC-based lower bounds and to derive \(\delta\)-explicit quantile results for broader classes of bandit and reinforcement learning problems. Another direction is to develop matching algorithmic upper bounds for the MI-private problems considered here. Finally, it would be interesting to study whether local minimax lower-bound techniques of the type developed in non-interactive settings can be transported to interactive and privacy-constrained minimax quantiles in a similarly systematic way \cite[Theorem~8]{ma2024high}.
\IEEEpeerreviewmaketitle
 
\bibliographystyle{IEEEtran}

\newpage
\bibliography{references}

\newpage
\appendix
\section{Proofs}
\begin{proof}[\textbf{Proof of Theorem~\ref{thm:quantile-to-expectation}}]
Fix \(M\in\mathcal M\) and \(\ALG\in\mathcal D\), and let
\[
r_{M,\ALG}
:=
\Quantile(1-\delta,\mathbb P^{M,\ALG},L).
\]
By the definition of the strict quantile, for every \(s<r_{M,\ALG}\)
we have \(\mathbb P(L>s)>\delta\). Letting \(s\uparrow r_{M,\ALG}\) implies
\(\mathbb P(L\ge r_{M,\ALG})\ge\delta\).
Since \(L(M,X)\ge 0\), we have
\[
\mathbb E^{M,\ALG}[L(M,X)]
\ge
r_{M,\ALG}\,
\mathbb P^{M,\ALG}\bigl(L(M,X)\ge r_{M,\ALG}\bigr)
\ge
\delta\, r_{M,\ALG}.
\]
Therefore,
\[
\mathbb E^{M,\ALG}[L(M,X)]
\ge
\delta\,
\Quantile(1-\delta,\mathbb P^{M,\ALG},L).
\]
Taking the supremum over \(\mathcal M\) and then the infimum over \(\mathcal D\) yields the claim.
\end{proof}
\begin{proof}[\textbf{Proof of Theorem~\ref{thm:lower-minimax-quantile-relation}}]
    The proof adapts the argument in the non-interactive setting of \cite[Thm.~4]{ma2024high} and uses the induced laws \(\mathbb P^{M,\ALG}\).

For the first inequality, if \(r>\mathfrak M(\delta)\), then there exists an algorithm \(\ALG\) such that for every \(M\in\mathcal M\),
\[
\Quantile(1-\delta,\mathbb P^{M,\ALG},L)\le r.
\]
Hence,
\[
\sup_{M\in\mathcal M}\mathbb P^{M,\ALG}(L(M,X)>r)\le \delta,
\]
which implies \(\mathfrak M_{-}(\delta)\le r\). Letting \(r\downarrow \mathfrak M(\delta)\) gives
\[
\mathfrak M_{-}(\delta)\le \mathfrak M(\delta).
\]

For the second inequality, fix \(\xi\in(0,\delta)\) and let \(r>\mathfrak M_{-}(\delta-\xi)\). Then there exists \(\ALG\) such that
\[
\sup_{M\in\mathcal M}\mathbb P^{M,\ALG}(L(M,X)>r)\le \delta-\xi < \delta.
\]
Therefore,
\[
\sup_{M\in\mathcal M}\Quantile(1-\delta,\mathbb P^{M,\ALG},L)\le r,
\]
so \(\mathfrak M(\delta)\le r\). Letting \(r\downarrow \mathfrak M_{-}(\delta-\xi)\) gives
\[
\mathfrak M(\delta)\le \mathfrak M_{-}(\delta-\xi).
\]

Since \(\delta\mapsto \mathfrak M_{-}(\delta)\) is non-increasing, it has at most countably many discontinuities, and the equality \(\mathfrak M(\delta)=\mathfrak M_{-}(\delta)\) follows for all other \(\delta\).

Finally, if
\[
\inf_{\ALG\in\mathcal D}\sup_{M\in\mathcal M}\mathbb P^{M,\ALG}(L(M,X)>r)>\delta,
\]
then by the definition of \(\mathfrak M_{-}(\delta)\) we must have \(\mathfrak M_{-}(\delta)\ge r\). The first inequality then gives
\[
\mathfrak M(\delta)\ge \mathfrak M_{-}(\delta)\ge r.
\]
\end{proof}

\begin{proof}[\textbf{Proof of Theorem~\ref{thm:hp-interactive-fano}}]
Fix an algorithm \(\ALG\in\mathcal D\), a reference distribution
\(Q_\ALG\in\Delta(\mathcal X)\), and a level \(\epsilon\in[0,1]\). Define
\[
\rho_{\Delta,\ALG}
:=
\mathbb P_{M\sim\mu,\;X\sim \mathbb P^{M,\ALG}}
\bigl(L(M,X)\le \Delta\bigr),
\]
and recall that
\[
\bar\rho_{\Delta,Q_\ALG}
:=
\mathbb P_{M\sim\mu,\;X\sim Q_\ALG}
\bigl(L(M,X)\le \Delta\bigr).
\]
Consider the following two distributions on \(\mathcal M\times\mathcal X\):
\[
P_0:\quad M\sim\mu,\;X\sim\mathbb P^{M,\ALG},
\]
and
\[
P_1:\quad M\sim\mu,\;X\sim Q_\ALG.
\]
Let
\[
\psi(M,X):=\mathbf 1\{L(M,X)\le \Delta\}.
\]
Under \(P_0\), the random variable \(\psi(M,X)\) has distribution
\(\Bern(\rho_{\Delta,\ALG})\), while under \(P_1\), it has distribution
\(\Bern(\bar\rho_{\Delta,Q_\ALG})\). By the data-processing inequality for
\(f\)-divergences,
\[
D_f\!\left(
\Bern(\rho_{\Delta,\ALG})
\,\middle\|\,
\Bern(\bar\rho_{\Delta,Q_\ALG})
\right)
\le
D_f(P_0\|P_1).
\]
Since \(P_0\) and \(P_1\) have the same marginal distribution \(\mu\) on
\(\mathcal M\), we have
\[
D_f(P_0\|P_1)
=
\mathbb E_{M\sim\mu}
\left[
D_f\!\left(\mathbb P^{M,\ALG}\|Q_\ALG\right)
\right].
\]
Therefore,
\[
D_f\!\left(
\Bern(\rho_{\Delta,\ALG})
\,\middle\|\,
\Bern(\bar\rho_{\Delta,Q_\ALG})
\right)
\le
\mathbb E_{M\sim\mu}
\left[
D_f\!\left(\mathbb P^{M,\ALG}\|Q_\ALG\right)
\right].
\]

Suppose now that
\[
\mathbb E_{M\sim\mu}
\left[
D_f\!\left(\mathbb P^{M,\ALG}\|Q_\ALG\right)
\right]
<
d_{f,\epsilon}(\bar\rho_{\Delta,Q_\ALG}).
\]
Then
\[
D_f\!\left(
\Bern(\rho_{\Delta,\ALG})
\,\middle\|\,
\Bern(\bar\rho_{\Delta,Q_\ALG})
\right)
<
d_{f,\epsilon}(\bar\rho_{\Delta,Q_\ALG}).
\]
By the definition
\[
d_{f,\epsilon}(p)
=
\begin{cases}
D_f\!\left(\Bern(1-\epsilon)\middle\|
\Bern(p)\right), & p\le 1-\epsilon,\\
0, & p>1-\epsilon,
\end{cases}
\]
and the nonnegativity of \(f\)-divergences, the preceding strict inequality
implies
\[
\bar\rho_{\Delta,Q_\ALG}<1-\epsilon
\]
and
\[
D_f\!\left(
\Bern(\rho_{\Delta,\ALG})
\,\middle\|\,
\Bern(\bar\rho_{\Delta,Q_\ALG})
\right)
<
D_f\!\left(
\Bern(1-\epsilon)
\,\middle\|\,
\Bern(\bar\rho_{\Delta,Q_\ALG})
\right).
\]

We now consider two cases. First, if
\[
\rho_{\Delta,\ALG}\le \bar\rho_{\Delta,Q_\ALG},
\]
then, since \(\bar\rho_{\Delta,Q_\ALG}<1-\epsilon\), we immediately get
\[
\rho_{\Delta,\ALG}<1-\epsilon.
\]
Second, suppose that
\[
\rho_{\Delta,\ALG}> \bar\rho_{\Delta,Q_\ALG}.
\]
By the monotonicity property of binary \(f\)-divergences
\cite[Lemma~C.1]{chen2024assouad}, the map
\[
x\mapsto
D_f\!\left(\Bern(x)\middle\|
\Bern(\bar\rho_{\Delta,Q_\ALG})\right)
\]
is nondecreasing for \(x\ge \bar\rho_{\Delta,Q_\ALG}\). Hence the strict
inequality
\[
D_f\!\left(
\Bern(\rho_{\Delta,\ALG})
\,\middle\|\,
\Bern(\bar\rho_{\Delta,Q_\ALG})
\right)
<
D_f\!\left(
\Bern(1-\epsilon)
\,\middle\|\,
\Bern(\bar\rho_{\Delta,Q_\ALG})
\right)
\]
implies
\[
\rho_{\Delta,\ALG}<1-\epsilon.
\]
Thus, in both cases,
\[
\rho_{\Delta,\ALG}<1-\epsilon.
\]
Consequently,
\[
\mathbb P_{M\sim\mu,\;X\sim \mathbb P^{M,\ALG}}
\bigl(L(M,X)>\Delta\bigr)
=
1-\rho_{\Delta,\ALG}
>
\epsilon.
\]

We now make the argument uniform over algorithms. Let
\(\delta<\epsilon^\star\). By the definition of \(\epsilon^\star\), there
exist \(\epsilon\in(\delta,\epsilon^\star)\) and a reference family
\(\{Q_\ALG\}_{\ALG\in\mathcal D}\) such that
\[
\sup_{\ALG\in\mathcal D}
\left[
\mathbb E_{M\sim\mu}
\left[
D_f\!\left(\mathbb P^{M,\ALG}\|Q_\ALG\right)
\right]
-
d_{f,\epsilon}(\bar\rho_{\Delta,Q_\ALG})
\right]
<0.
\]
Equivalently, for every \(\ALG\in\mathcal D\),
\[
\mathbb E_{M\sim\mu}
\left[
D_f\!\left(\mathbb P^{M,\ALG}\|Q_\ALG\right)
\right]
<
d_{f,\epsilon}(\bar\rho_{\Delta,Q_\ALG}).
\]
Applying the preceding argument to every \(\ALG\in\mathcal D\) gives
\[
\inf_{\ALG\in\mathcal D}
\mathbb P_{M\sim\mu,\;X\sim \mathbb P^{M,\ALG}}
\bigl(L(M,X)>\Delta\bigr)
\ge
\epsilon.
\]
Since the probability on the left averages over \(M\sim\mu\), it is bounded
above by the corresponding worst-case probability over \(M\in\mathcal M\).
Therefore,
\[
\inf_{\ALG\in\mathcal D}
\sup_{M\in\mathcal M}
\mathbb P^{M,\ALG}
\bigl(L(M,X)>\Delta\bigr)
\ge
\epsilon
>
\delta.
\]
By Theorem~\ref{thm:lower-minimax-quantile-relation}, this implies
\[
\mathfrak M_-(\delta)\ge \Delta.
\]
This proves the theorem.
\end{proof}

\begin{proof}[\textbf{Proof of Theorem~\ref{thm:hp-lecam}}]
We prove part (a) first. Fix \(a\in(0,1)\), and define the thresholded loss
\[
g_a(t):=\mathbf 1\{t>a\Delta\}.
\]
The separation condition implies that for every \(x\in\mathcal X\),
\[
g_a(L(M_1,x))+g_a(L(M_2,x))\ge 1.
\]
Applying the interactive two-point Le Cam's inequality for bounded losses from \cite[Prop.~4]{chen2024assouad} to the pair \((M_1,M_2)\) and the losses \(g_a\circ L\), we obtain
\[
\inf_{\ALG\in\mathcal D}\sup_{j\in\{1,2\}}
\mathbb E^{M_j,\ALG}[g_a(L(M_j,X))]
\ge
\frac12\Bigl(1-\sup_{\ALG\in\mathcal D}\TV(\mathbb P^{M_1,\ALG},\mathbb P^{M_2,\ALG})\Bigr).
\]
Since
\[
\mathbb E^{M_j,\ALG}[g_a(L(M_j,X))]
=
\mathbb P^{M_j,\ALG}(L(M_j,X)>a\Delta),
\]
the assumption in part (a) yields
\[
\inf_{\ALG\in\mathcal D}\sup_{M\in\{M_1,M_2\}}
\mathbb P^{M,\ALG}(L(M,X)>a\Delta)
>
\delta.
\]
Using Theorem~\ref{thm:lower-minimax-quantile-relation}, we have
\[
\mathfrak M_{-}(\delta)\ge a\Delta.
\]
Letting \(a\uparrow 1\) implies
\[
\mathfrak M_{-}(\delta)\ge \Delta.
\]

For part (b), by the Bretagnolle--Huber inequality, for every \(\ALG\in\mathcal D\),
\[
\TV(P^{M_1,\ALG},P^{M_2,\ALG})
\le
\sqrt{1-\exp\!\left(
-\KL(P^{M_1,\ALG}\|P^{M_2,\ALG})
\right)}.
\]
This implies
\[
\sup_{\ALG\in\mathcal D}
\TV(P^{M_1,\ALG},P^{M_2,\ALG})
\le
\sup_{\ALG\in\mathcal D}
\sqrt{1-\exp\!\left(
-
\KL(P^{M_1,\ALG}\|P^{M_2,\ALG})
\right)}.
\]
Since \(t\mapsto \sqrt{1-e^{-t}}\) is nondecreasing on \([0,\infty)\), it follows that
\[
\sup_{\ALG\in\mathcal D}
\TV(P^{M_1,\ALG},P^{M_2,\ALG})
\le
\sqrt{1-\exp\!\left(
-\sup_{\ALG\in\mathcal D}
\KL(P^{M_1,\ALG}\|P^{M_2,\ALG})
\right)}.
\]
Under the assumed KL condition,
\[
\sup_{\ALG\in\mathcal D}
\TV(P^{M_1,\ALG},P^{M_2,\ALG})
<
\sqrt{1-4\delta(1-\delta)}
=
1-2\delta,
\]
where the last equality uses \(\delta\in(0,\tfrac12)\). Hence the condition in
part~(a) holds, and the conclusion follows.
\end{proof}

\begin{proof}[\textbf{Proof of Theorem~\ref{thm:gaussian-private-meta}}]
Fix any feasible private rule
\(\ALG^{\mathrm{GMI}}\in\mathcal D_\varepsilon^{\mathrm{GMI}}\). By
Assumption~\ref{ass:noise-floor},
\[
\sigma^2\ge \sigma_{\min}^2(\varepsilon,d).
\]
Now fix \(u>0\), and let \(M_1(u),M_2(u)\) be the two models from
Assumption~\ref{ass:two-point-gaussian}. The separation condition gives
\[
L(M_1(u),y)+L(M_2(u),y)\ge 2\phi(u)
\qquad
\text{for all }y\in\mathcal Y.
\]
Moreover,
\[
\KL\!\left(
P^{M_1(u),\ALG^{\mathrm{GMI}}}\middle\|P^{M_2(u),\ALG^{\mathrm{GMI}}}
\right)
\le
\frac{a u^2}{1+\sigma^2}
\le
\frac{a u^2}{1+\sigma_{\min}^2(\varepsilon,d)}.
\]
Hence, by Corollary~\ref{cor:private-hp-lecam}(b), applied to the restricted class
\(\mathcal D_\varepsilon^{\mathrm{GMI}}\),
\[
\mathfrak M^{\mathrm{GMI}}_-(\delta;\varepsilon)\ge \phi(u)
\]
whenever
\[
\frac{a u^2}{1+\sigma_{\min}^2(\varepsilon,d)}
<
\Lambda_\delta.
\]
For arbitrary \(\eta\in(0,1)\), choose
\[
u_\eta
=
(1-\eta)
\sqrt{
\frac{1+\sigma_{\min}^2(\varepsilon,d)}{a}\,
\Lambda_\delta
}.
\]
Then the strict KL condition holds, and therefore
\[
\mathfrak M^{\mathrm{GMI}}_-(\delta;\varepsilon)
\ge
\phi(u_\eta).
\]
Letting \(\eta\downarrow0\) and using the monotonicity and left-continuity of \(\phi\) gives
\[
\mathfrak M^{\mathrm{GMI}}_-(\delta;\varepsilon)
\ge
\phi\!\left(
\sqrt{
\frac{1+\sigma_{\min}^2(\varepsilon,d)}{a}\,
\Lambda_\delta
}
\right).
\]
The lower bound for \(\mathfrak M^{\mathrm{GMI}}(\delta;\varepsilon)\) follows from
Corollary~\ref{cor:private-lower-minimax-quantile-relation}, applied to the privacy-constrained
class \(\mathcal D_\varepsilon^{\mathrm{GMI}}\).
\end{proof}

\begin{proof}[\textbf{Proof of Theorem~\ref{thm:nonprivate-mean}}]
We apply Theorem~\ref{thm:hp-lecam}. Fix \(u>0\), and consider the hard pair
\[
M_1:\theta=u,
\qquad
M_2:\theta=-u.
\]
We take the loss to be squared-error:
\[
L(\theta,x)=(\hat\theta-\theta)^2.
\]
For any outcome \(x=(x_{1:n},\hat\theta)\),
\[
L(M_1,x)+L(M_2,x)
=
(\hat\theta-u)^2+(\hat\theta+u)^2
=
2\hat\theta^2+2u^2
\ge 2u^2.
\]
Thus the separation condition holds with
\[
\Delta=u^2.
\]

Let \(P_i=\mathbb P^{M_i,\ALG}\) denote the law of
\((X_{1:n},\hat\theta)\) under \(M_i\). By data processing,
\[
\KL(P_1\|P_2)
\le
\KL\!\Bigl(
\mathcal N(u,1)^{\otimes n}
\Big\|
\mathcal N(-u,1)^{\otimes n}
\Bigr).
\]
The KL divergence between the two product Gaussian laws is
\(
2nu^2.
\)
Let
\[
\Lambda_\delta
:=
\log\!\Bigl(\frac{1}{4\delta(1-\delta)}\Bigr).
\]
Whenever
\[
2nu^2<\Lambda_\delta,
\]
Theorem~\ref{thm:hp-lecam}(b) yields
\[
\mathfrak M_-(\delta)\ge u^2.
\]
Choosing
\[
u=(1-\eta)\sqrt{\frac{\Lambda_\delta}{2n}}
\]
for arbitrary \(\eta\in(0,1)\), and then letting \(\eta\downarrow0\), gives
\[
\mathfrak M_-(\delta)
\ge
\frac{1}{2n}
\log\!\Bigl(\frac{1}{4\delta(1-\delta)}\Bigr).
\]
The lower bound for \(\mathfrak M(\delta)\) follows from
Theorem~\ref{thm:lower-minimax-quantile-relation}.
\end{proof}

\begin{proof}[\textbf{Proof of Theorem~\ref{thm:nonprivate-two-arm}}]
We apply Theorem~\ref{thm:hp-lecam}. Fix \(g\in(0,1]\), and consider the
bounded symmetric hard pair
\[
M_1:\quad
(\mu_1,\mu_2)
=
\Bigl(\frac12+\frac g2,\frac12-\frac g2\Bigr),
\]
\[
M_2:\quad
(\mu_1,\mu_2)
=
\Bigl(\frac12-\frac g2,\frac12+\frac g2\Bigr).
\]
Since \(g\le 1\), both models belong to \(\mathcal M\).

For any transcript \(x\), let \(N_a(x)\) denote the number of pulls of arm
\(a\). Under \(M_1\), arm \(1\) is optimal and the gap is \(g\), so
\[
L(M_1,x)=g\,N_2(x).
\]
Under \(M_2\), arm \(2\) is optimal and the gap is again \(g\), so
\[
L(M_2,x)=g\,N_1(x).
\]
Hence, for every transcript \(x\),
\[
L(M_1,x)+L(M_2,x)
=
g\bigl(N_1(x)+N_2(x)\bigr)
=
gT.
\]
Thus the separation condition in Theorem~\ref{thm:hp-lecam} holds with
\[
\Delta=\frac{gT}{2}.
\]

Let \(P_i=\mathbb P^{M_i,\ALG}\) denote the transcript law under \(M_i\).
By the arm-wise KL decomposition for adaptive bandits \cite[Lemma 15.1]{lattimore2020bandit},
\[
\KL(P_1\|P_2)
=
\sum_{a=1}^2
\mathbb E_{M_1}[N_a(T)]\,
\KL\!\Bigl(
\mathcal N(\mu_a^{(1)},1)
\Big\|
\mathcal N(\mu_a^{(2)},1)
\Bigr).
\]
For each arm \(a\), the two means differ by \(g\). Therefore,
\[
\KL\!\Bigl(
\mathcal N(\mu_a^{(1)},1)
\Big\|
\mathcal N(\mu_a^{(2)},1)
\Bigr)
=
\frac{g^2}{2}.
\]
Since \(N_1(T)+N_2(T)=T\), we obtain
\[
\KL(P_1\|P_2)
=
\frac{g^2T}{2}.
\]

Let
\[
\Lambda_\delta
:=
\log\!\Bigl(\frac{1}{4\delta(1-\delta)}\Bigr).
\]
Whenever
\[
\frac{g^2T}{2}<\Lambda_\delta,
\]
Theorem~\ref{thm:hp-lecam}(b) yields
\[
\mathfrak M_-(\delta)\ge \frac{gT}{2}.
\]

We now choose \(g\) as large as permitted by both the KL constraint and the
bounded-mean constraint \(g\le 1\). For arbitrary \(\eta\in(0,1)\), set
\[
g
=
(1-\eta)
\min\!\left\{
1,\,
\sqrt{\frac{2\Lambda_\delta}{T}}
\right\}.
\]
Then \(g\in(0,1]\), so \(M_1,M_2\in\mathcal M\), and the strict KL condition
holds. Hence
\[
\mathfrak M_-(\delta)
\ge
\frac{(1-\eta)T}{2}
\min\!\left\{
1,\,
\sqrt{\frac{2\Lambda_\delta}{T}}
\right\}.
\]
Letting \(\eta\downarrow0\) gives
\[
\mathfrak M_-(\delta)
\ge
\frac{T}{2}
\min\!\left\{
1,\,
\sqrt{\frac{2\Lambda_\delta}{T}}
\right\}.
\]
The lower bound for \(\mathfrak M(\delta)\) follows from
Theorem~\ref{thm:lower-minimax-quantile-relation}.
\end{proof}

\begin{proof}[\textbf{Proof of Theorem \ref{thm:karmed-fano-bandit}}]
Fix \(K\ge2\), \(T\ge2\), and \(\delta\in(0,1-1/K)\). We verify the
conditions of Theorem~\ref{thm:hp-interactive-fano} for a suitable finite
subfamily of \(K\)-armed Gaussian bandits.

For a gap parameter \(\alpha\in(0,1]\), define the reference model \(M_0\) by
\[
\mu_a^{(0)}=0,
\qquad a=1,\dots,K.
\]
For each \(j\in[K]\), define the alternative model \(M_j\) by
\[
\mu_j^{(j)}=\alpha,
\qquad
\mu_a^{(j)}=0
\quad\text{for }a\neq j.
\]
Since \(\alpha\in(0,1]\), all these models belong to the class
\([0,1]^K\). Under \(M_j\), arm \(j\) is the unique optimal arm and all other
arms have gap \(\alpha\).

We use the uniform prior on the hard alternatives:
\[
\mu(M_j)=\frac1K,
\qquad j=1,\dots,K.
\]
The outcome space is the transcript space
\[
\mathcal X=([K]\times\mathbb R)^T,
\]
and the outcome is
\[
X=H_T=(A_1,R_1,\dots,A_T,R_T).
\]
For a policy \(\ALG\), let \(P_j^\ALG\) denote the law of \(H_T\) under model
\(M_j\) and policy \(\ALG\), and let \(P_0^\ALG\) denote the law of \(H_T\)
under the reference model \(M_0\).

Define
\[
N_a(T):=\sum_{t=1}^T \mathbf 1\{A_t=a\}
\]
to be the number of pulls of arm \(a\). Under \(M_j\), the pseudo-regret is
\[
L(M_j,H_T)
=
\alpha\bigl(T-N_j(T)\bigr).
\]
Set the threshold
\[
\Delta_\alpha:=\frac{\alpha(T-1)}{2}.
\]
If
\[
L(M_j,H_T)\le \Delta_\alpha,
\]
then
\[
\alpha(T-N_j(T))
\le
\frac{\alpha(T-1)}{2}.
\]
Since \(\alpha>0\), this implies
\[
N_j(T)\ge \frac{T+1}{2}>\frac T2.
\]
Therefore,
\[
\{L(M_j,H_T)\le \Delta_\alpha\}
\subseteq
\left\{N_j(T)>\frac T2\right\}.
\]

We now choose the \(f\)-divergence. Let
\[
f(t)=-\log t.
\]
Then
\[
D_f(P\|Q)=\KL(Q\|P),
\]
so \(D_f\) is reverse KL. For each algorithm \(\ALG\), choose the
algorithm-dependent reference law
\[
Q_\ALG:=P_0^\ALG.
\]
This is admissible in Theorem~\ref{thm:hp-interactive-fano}, since the theorem
allows a reference family \(\{Q_\ALG\}_{\ALG\in\mathcal D}\).

We first verify the reference-success bound. For this reference family,
\[
\bar\rho_{\Delta_\alpha,Q_\ALG}
=
\mathbb P_{M\sim\mu,\;H_T\sim P_0^\ALG}
\bigl(L(M,H_T)\le \Delta_\alpha\bigr).
\]
Using the uniform prior,
\[
\bar\rho_{\Delta_\alpha,Q_\ALG}
=
\frac1K
\sum_{j=1}^K
P_0^\ALG
\bigl(L(M_j,H_T)\le \Delta_\alpha\bigr).
\]
By the implication above,
\[
P_0^\ALG
\bigl(L(M_j,H_T)\le \Delta_\alpha\bigr)
\le
P_0^\ALG
\left(N_j(T)>\frac T2\right).
\]
The events
\[
\left\{N_j(T)>\frac T2\right\},
\qquad j=1,\dots,K,
\]
are disjoint, because two distinct arms cannot both be pulled more than half
of the \(T\) rounds. Hence
\[
\sum_{j=1}^K
P_0^\ALG
\left(N_j(T)>\frac T2\right)
\le 1.
\]
Therefore,
\[
\begin{aligned}
\bar\rho_{\Delta_\alpha,Q_\ALG}
&\le
\frac1K
\sum_{j=1}^K
P_0^\ALG
\left(N_j(T)>\frac T2\right)\\
&\le
\frac1K.
\end{aligned}
\]

Next we verify the average divergence bound. Since \(D_f(P\|Q)=\KL(Q\|P)\)
and \(Q_\ALG=P_0^\ALG\), we have
\[
\mathbb E_{M\sim\mu}
D_f(P^{M,\ALG}\|Q_\ALG)
=
\frac1K
\sum_{j=1}^K
D_f(P_j^\ALG\|P_0^\ALG)
=
\frac1K
\sum_{j=1}^K
\KL(P_0^\ALG\|P_j^\ALG).
\]

We compute each term using the arm-wise KL decomposition for
adaptive bandits \cite[Lemma 15.1]{lattimore2020bandit}.
Grouping the terms according to the arm pulled,
\[
\KL(P_0^\ALG\|P_j^\ALG)
=
\sum_{a=1}^K
\mathbb E_0^\ALG[N_a(T)]
\KL\!\left(
\mathcal N(\mu_a^{(0)},1)
\,\middle\|\,
\mathcal N(\mu_a^{(j)},1)
\right).
\]
By construction,
\[
\mu_a^{(0)}=0
\quad\text{for all }a,
\]
and
\[
\mu_j^{(j)}=\alpha,
\qquad
\mu_a^{(j)}=0
\quad\text{for }a\neq j.
\]
Hence all terms in the arm-wise KL sum vanish except the term \(a=j\). Thus
\[
\KL(P_0^\ALG\|P_j^\ALG)
=
\mathbb E_0^\ALG[N_j(T)]
\KL\!\left(
\mathcal N(0,1)
\,\middle\|\,
\mathcal N(\alpha,1)
\right).
\]
Since
\[
\KL\!\left(
\mathcal N(0,1)
\,\middle\|\,
\mathcal N(\alpha,1)
\right)
=
\frac{\alpha^2}{2},
\]
we obtain
\[
\KL(P_0^\ALG\|P_j^\ALG)
=
\frac{\alpha^2}{2}
\mathbb E_0^\ALG[N_j(T)].
\]

Averaging over \(j\), we get
\[
\begin{aligned}
\mathbb E_{M\sim\mu}
D_f(P^{M,\ALG}\|Q_\ALG)
&=
\frac1K
\sum_{j=1}^K
\KL(P_0^\ALG\|P_j^\ALG)\\
&=
\frac{\alpha^2}{2K}
\sum_{j=1}^K
\mathbb E_0^\ALG[N_j(T)].
\end{aligned}
\]
Since exactly one arm is pulled at each round,
\[
\sum_{j=1}^K N_j(T)=T.
\]
Taking expectation under \(M_0\), we obtain
\[
\sum_{j=1}^K
\mathbb E_0^\ALG[N_j(T)]
=
T.
\]
Therefore,
\[
\mathbb E_{M\sim\mu}
D_f(P^{M,\ALG}\|Q_\ALG)
=
\frac{\alpha^2T}{2K}.
\]
This identity holds for every \(\ALG\), and therefore
\[
\sup_{\ALG\in\mathcal D}
\mathbb E_{M\sim\mu}
D_f(P^{M,\ALG}\|Q_\ALG)
=
\frac{\alpha^2T}{2K}.
\]

We now verify the binary Fano term. For reverse KL,
\[
d_{f,\epsilon}(p)
=
D_f\bigl(\Bern(1-\epsilon)\,\|\,\Bern(p)\bigr)
=
\KL\bigl(\Bern(p)\,\|\,\Bern(1-\epsilon)\bigr)
=
\KL(p\|1-\epsilon),
\]
whenever \(p\le 1-\epsilon\). If \(\epsilon<1-1/K\), then
\[
\frac1K<1-\epsilon.
\]
Since
\[
\bar\rho_{\Delta_\alpha,Q_\ALG}\le \frac1K,
\]
we are in the first branch of the definition of \(d_{f,\epsilon}\), and
\[
d_{f,\epsilon}(\bar\rho_{\Delta_\alpha,Q_\ALG})
=
\KL(\bar\rho_{\Delta_\alpha,Q_\ALG}\|1-\epsilon).
\]
The map
\[
p\mapsto \KL(p\|1-\epsilon)
\]
is decreasing on \(p\in[0,1-\epsilon]\). Indeed,
\[
\frac{\partial}{\partial p}\KL(p\|q)
=
\log\left(\frac{p(1-q)}{q(1-p)}\right),
\]
which is negative whenever \(p<q\). Therefore,
\[
\bar\rho_{\Delta_\alpha,Q_\ALG}\le \frac1K<1-\epsilon
\]
implies
\[
d_{f,\epsilon}(\bar\rho_{\Delta_\alpha,Q_\ALG})
\ge
\KL\left(\frac1K\,\middle\|\,1-\epsilon\right).
\]

Combining the divergence bound and the binary Fano bound, we get, for every
\(\ALG\in\mathcal D\),
\[
\mathbb E_{M\sim\mu}
D_f(P^{M,\ALG}\|Q_\ALG)
-
d_{f,\epsilon}(\bar\rho_{\Delta_\alpha,Q_\ALG})
\le
\frac{\alpha^2T}{2K}
-
\KL\left(\frac1K\,\middle\|\,1-\epsilon\right).
\]
Thus the condition in Theorem~\ref{thm:hp-interactive-fano} is satisfied
whenever
\[
\frac{\alpha^2T}{2K}
<
\KL\left(\frac1K\,\middle\|\,1-\epsilon\right).
\]

Now fix \(\epsilon\in(0,1-1/K)\). If
\[
\alpha
<
\min\left\{
1,
\sqrt{
\frac{2K}{T}
\KL\left(\frac1K\,\middle\|\,1-\epsilon\right)
}
\right\},
\]
then all assumptions and the strict Fano condition are satisfied. Hence, by
Theorem~\ref{thm:hp-interactive-fano},
\[
\mathfrak M_-(\delta')\ge \Delta_\alpha
=
\frac{\alpha(T-1)}{2}
\qquad
\text{for every } \delta'<\epsilon.
\]

At this point we have proved the following intermediate statement. For every
\(\epsilon\in(0,1-1/K)\) and every \(\alpha\in(0,1]\) such that
\[
\frac{\alpha^2T}{2K}
<
\KL\left(\frac1K\,\middle\|\,1-\epsilon\right),
\]
Theorem~\ref{thm:hp-interactive-fano} implies
\[
\mathfrak M_-(\delta')\ge \Delta_\alpha
=
\frac{\alpha(T-1)}{2}
\qquad
\text{for every } \delta'\in[0,\epsilon).
\]
It remains to choose the largest admissible value of \(\alpha\), up to an
arbitrarily small slack, and then remove the slack between \(\epsilon\) and
the target level \(\delta\).

Fix the target confidence level
\[
\delta\in(0,1-1/K).
\]
Choose any
\[
\epsilon\in(\delta,1-1/K).
\]
Define
\[
A_\epsilon
:=
\min\left\{
1,
\sqrt{
\frac{2K}{T}
\KL\left(\frac1K\,\middle\|\,1-\epsilon\right)
}
\right\}.
\]
For any \(\eta\in(0,1)\), set
\[
\alpha_{\epsilon,\eta}:=(1-\eta)A_\epsilon.
\]
Then
\[
0<\alpha_{\epsilon,\eta}\le 1.
\]
Moreover, since \((1-\eta)^2<1\),
\[
\alpha_{\epsilon,\eta}^2
=
(1-\eta)^2 A_\epsilon^2
\le
(1-\eta)^2
\frac{2K}{T}
\KL\left(\frac1K\,\middle\|\,1-\epsilon\right)
<
\frac{2K}{T}
\KL\left(\frac1K\,\middle\|\,1-\epsilon\right).
\]
Equivalently,
\[
\frac{\alpha_{\epsilon,\eta}^2T}{2K}
<
\KL\left(\frac1K\,\middle\|\,1-\epsilon\right).
\]
Thus, \(\alpha_{\epsilon,\eta}\) is an admissible gap parameter for the
intermediate statement above.

Therefore, for every \(\delta'\in[0,\epsilon)\),
\[
\mathfrak M_-(\delta')
\ge
\frac{T-1}{2}\alpha_{\epsilon,\eta}
=
\frac{T-1}{2}(1-\eta)A_\epsilon.
\]
Since the target level \(\delta\) belongs to \([0,\epsilon)\), we now set
\(\delta'=\delta\), which gives
\[
\mathfrak M_-(\delta)
\ge
\frac{T-1}{2}(1-\eta)A_\epsilon.
\]
This inequality holds for every \(\eta\in(0,1)\). Letting
\(\eta\downarrow0\) gives
\[
\mathfrak M_-(\delta)
\ge
\frac{T-1}{2}A_\epsilon.
\]
Thus, we have shown that for every
\[
\delta\in(0,1-1/K)
\]
and every
\[
\epsilon\in(\delta,1-1/K),
\]
one has
\[
\mathfrak M_-(\delta)
\ge
\frac{T-1}{2}
\min\left\{
1,
\sqrt{
\frac{2K}{T}
\KL\left(\frac1K\,\middle\|\,1-\epsilon\right)
}
\right\}.
\]

It remains to remove the auxiliary level \(\epsilon\). Fix
\[
\delta\in(0,1-1/K).
\]
From the preceding argument, for every
\[
\epsilon\in(\delta,1-1/K),
\]
we have
\[
\mathfrak M_-(\delta)
\ge
\frac{T-1}{2}
\min\left\{
1,
\sqrt{
\frac{2K}{T}
\KL\left(\frac1K\,\middle\|\,1-\epsilon\right)
}
\right\}.
\]
For this fixed \(\delta\), the left-hand side no longer depends on
\(\epsilon\). Only the right-hand side depends on \(\epsilon\). Therefore, since
the inequality holds for every \(\epsilon>\delta\) sufficiently close to
\(\delta\), we may let \(\epsilon\downarrow\delta\) on the right-hand side.

We now justify this limiting step. Since
\[
\delta\in(0,1-1/K),
\]
we have
\[
1-\delta\in(1/K,1)\subset(0,1).
\]
For fixed \(p=1/K\), the binary relative entropy
\[
q\mapsto \KL(p\|q)
=
p\log\frac{p}{q}
+
(1-p)\log\frac{1-p}{1-q}
\]
is continuous for \(q\in(0,1)\), because it is a sum of logarithmic functions
whose arguments are positive on this interval. Hence, as
\(\epsilon\downarrow\delta\),
\[
1-\epsilon\to 1-\delta
\]
and therefore
\[
\KL\left(\frac1K\,\middle\|\,1-\epsilon\right)
\to
\KL\left(\frac1K\,\middle\|\,1-\delta\right).
\]
Moreover, the function
\[
x\mapsto
\min\left\{
1,
\sqrt{\frac{2K}{T}x}
\right\}
\]
is continuous for \(x\ge0\). Thus the right-hand side converges to
\[
\frac{T-1}{2}
\min\left\{
1,
\sqrt{
\frac{2K}{T}
\KL\left(\frac1K\,\middle\|\,1-\delta\right)
}
\right\}.
\]
Consequently,
\[
\mathfrak M_-(\delta)
\ge
\frac{T-1}{2}
\min\left\{
1,
\sqrt{
\frac{2K}{T}
\KL\left(\frac1K\,\middle\|\,1-\delta\right)
}
\right\}.
\]
This proves the theorem.
\end{proof}

\begin{proof}[\textbf{Proof of Theorem~\ref{thm:private-mean}}]
We verify the hypotheses of Theorem~\ref{thm:gaussian-private-meta} with
\(d=n\), using squared-error loss.

First, since \(\hat\theta\) is a measurable function of
\(\widetilde Y_{1:n}\),
\[
I(S;Y)
=
I(X_{1:n};\widetilde Y_{1:n},\hat\theta)
=
I(X_{1:n};\widetilde Y_{1:n}).
\]
Because the coordinates are independent and each passes through a scalar
Gaussian channel,
\[
I(X_{1:n};\widetilde Y_{1:n})
=
\sum_{i=1}^n I(X_i;\widetilde Y_i)
=
\frac n2\log\!\Bigl(1+\frac{1}{\sigma^2}\Bigr).
\]
The privacy constraint \(I(S;Y)\le\varepsilon\) therefore implies
\[
\sigma^2\ge \frac{1}{e^{2\varepsilon/n}-1},
\]
so
\[
\sigma_{\min}^2(\varepsilon,n)
=
\frac{1}{e^{2\varepsilon/n}-1}.
\]

Next, fix \(u>0\), and consider the hard pair
\[
M_1(u):\theta=u,
\qquad
M_2(u):\theta=-u.
\]
For every released output \(y=(\widetilde y_{1:n},\hat\theta)\),
\[
L(M_1(u),y)+L(M_2(u),y)=
(\hat\theta-u)^2+(\hat\theta+u)^2
=
2\hat\theta^2+2u^2
\ge 2u^2.
\]
Thus the separation condition holds with \(\Delta=u^2\).

Finally, by data processing,
\[
\KL(P^{M_1(u),\ALG^{\mathrm{GMI}}}\|P^{M_2(u),\ALG^{\mathrm{GMI}}})
\le
\KL\!\Bigl(
\mathcal N(u,1+\sigma^2)^{\otimes n}
\Big\|
\mathcal N(-u,1+\sigma^2)^{\otimes n}
\Bigr)
=
\frac{2nu^2}{1+\sigma^2}.
\]
Since every feasible private rule satisfies
\[
\sigma^2\ge \sigma_{\min}^2(\varepsilon,n),
\]
we have
\[
\KL(P^{M_1(u),\ALG^{\mathrm{GMI}}}\|P^{M_2(u),\ALG^{\mathrm{GMI}}})
\le
\frac{2nu^2}{1+\sigma_{\min}^2(\varepsilon,n)}.
\]
Let
\[
\Lambda_\delta
:=
\log\!\Bigl(\frac{1}{4\delta(1-\delta)}\Bigr).
\]
Thus Assumption~\ref{ass:two-point-gaussian} holds with
\[
\phi(u)=u^2,
\qquad
a=2n.
\]
Applying Theorem~\ref{thm:gaussian-private-meta} gives 

\[
\mathfrak M^{\mathrm{GMI}}_-(\delta;\varepsilon)
\ge
\frac{1+\sigma_{\min}^2(\varepsilon,n)}{2n}
\log\!\Bigl(\frac{1}{4\delta(1-\delta)}\Bigr).
\]
The lower bound for \(\mathfrak M^{\mathrm{GMI}}(\delta;\varepsilon)\) follows from
Corollary~\ref{cor:private-lower-minimax-quantile-relation}.
\end{proof}

\begin{proof}[\textbf{Proof of Theorem~\ref{thm:private-bandit}}]
We first derive the privacy-induced noise floor. Let
\[
H_{t-1}=(A_1,\widetilde Y_1,\dots,A_{t-1},\widetilde Y_{t-1})
\]
denote the privatized history available to the policy before round \(t\). The
private policy is adapted to this filtration, so that
\[
A_t\sim \pi_t(\cdot\mid H_{t-1}),
\]
possibly with additional internal randomization independent of the reward and
privacy noises.

Recall that the sensitive object is the raw reward sequence
\[
S=(R_1,\dots,R_T),
\]
whereas the released transcript is
\[
Y=(A_1,\widetilde Y_1,\dots,A_T,\widetilde Y_T).
\]
By the chain rule for mutual information,
\[
I(S;Y)
=
\sum_{t=1}^T
I(R_{1:T};A_t,\widetilde Y_t\mid H_{t-1}).
\]
Moreover,
\begin{align*}
I(R_{1:T};A_t,\widetilde Y_t\mid H_{t-1})
&=
I(R_{1:T};A_t\mid H_{t-1})
+
I(R_{1:T};\widetilde Y_t\mid H_{t-1},A_t)  \\
&\ge
I(R_{1:T};\widetilde Y_t\mid H_{t-1},A_t)  \\
&\ge
I(R_t;\widetilde Y_t\mid H_{t-1},A_t),
\end{align*}
where the last inequality follows because \(R_t\) is a component of
\(R_{1:T}\).

Conditionally on \(H_{t-1}\) and \(A_t=a\), the raw reward satisfies
\[
R_t\sim \mathcal N(\mu_a,1),
\]
and the released observation is
\[
\widetilde Y_t=R_t+Z_t,
\qquad
Z_t\sim\mathcal N(0,\sigma^2),
\]
with \(Z_t\) independent of \(R_t\). Therefore,
\[
I(R_t;\widetilde Y_t\mid H_{t-1},A_t)
=
\frac12\log\left(1+\frac1{\sigma^2}\right).
\]
Hence
\[
I(S;Y)
\ge
\frac T2\log\left(1+\frac1{\sigma^2}\right).
\]
Since every feasible private rule satisfies \(I(S;Y)\le \varepsilon\), we must
have
\[
\sigma^2
\ge
\frac{1}{e^{2\varepsilon/T}-1}
=
\sigma_{\min}^2(\varepsilon,T).
\]

We now prove the lower bound. Fix \(g\in(0,1]\), and consider the bounded
symmetric hard pair
\[
M_1:\quad
(\mu_1,\mu_2)
=
\Bigl(\frac12+\frac g2,\frac12-\frac g2\Bigr),
\]
\[
M_2:\quad
(\mu_1,\mu_2)
=
\Bigl(\frac12-\frac g2,\frac12+\frac g2\Bigr).
\]
Since \(g\le 1\), both models belong to \(\mathcal M\).

For any released transcript \(y\), let \(N_a(y)\) denote the number of pulls of
arm \(a\). Under \(M_1\), arm \(1\) is optimal and the gap is \(g\), so
\[
L(M_1,y)=g\,N_2(y).
\]
Under \(M_2\), arm \(2\) is optimal and the gap is \(g\), so
\[
L(M_2,y)=g\,N_1(y).
\]
Thus, for every released transcript \(y\),
\[
L(M_1,y)+L(M_2,y)
=
g\bigl(N_1(y)+N_2(y)\bigr)
=
gT.
\]
Hence the separation condition in Corollary~\ref{cor:private-hp-lecam} holds with
\[
\Delta=\frac{gT}{2}.
\]

Let \(P_i=\mathbb P^{M_i,\ALG^{\mathrm{GMI}}}\) denote the released-transcript law under
\(M_i\) for a feasible private rule \(\ALG^{\mathrm{GMI}}\). By the adaptive bandit KL
decomposition \cite[Lemma 15.1]{lattimore2020bandit},
\[
\KL(P_1\|P_2)
=
\sum_{a=1}^2
\mathbb E_{M_1,\ALG^{\mathrm{GMI}}}[N_a(T)]\,
\KL(\nu_a^{(1)}\|\nu_a^{(2)}),
\]
where
\[
\nu_a^{(i)}
=
\mathcal N(\mu_a^{(i)},1+\sigma^2)
\]
is the law of the privatized reward from arm \(a\) under \(M_i\). For each arm,
the two means differ by \(g\), and therefore
\[
\KL(\nu_a^{(1)}\|\nu_a^{(2)})
=
\frac{g^2}{2(1+\sigma^2)}.
\]
Since \(N_1(T)+N_2(T)=T\),
\[
\KL(P_1\|P_2)
=
\frac{g^2T}{2(1+\sigma^2)}.
\]
Using the privacy-induced noise floor
\[
\sigma^2\ge\sigma_{\min}^2(\varepsilon,T),
\]
we obtain
\[
\KL(P_1\|P_2)
\le
\frac{g^2T}{2(1+\sigma_{\min}^2(\varepsilon,T))}.
\]

Whenever
\[
\frac{g^2T}{2(1+\sigma_{\min}^2(\varepsilon,T))}
<
\Lambda_\delta,
\]
Corollary~\ref{cor:private-hp-lecam}(b), applied to the private decision
class, yields
\[
\mathfrak M^{\mathrm{GMI}}_-(\delta;\varepsilon)\ge \frac{gT}{2}.
\]

Choose, for arbitrary \(\eta\in(0,1)\),
\[
g
=
(1-\eta)
\min\!\left\{
1,\,
\sqrt{
\frac{2(1+\sigma_{\min}^2(\varepsilon,T))\Lambda_\delta}{T}
}
\right\}.
\]
Then \(g\in(0,1]\), so the hard pair belongs to the bounded model class, and
the strict KL condition holds. Hence
\[
\mathfrak M^{\mathrm{GMI}}_-(\delta;\varepsilon)
\ge
\frac{(1-\eta)T}{2}
\min\!\left\{
1,\,
\sqrt{
\frac{2(1+\sigma_{\min}^2(\varepsilon,T))\Lambda_\delta}{T}
}
\right\}.
\]
Letting \(\eta\downarrow0\) gives the claimed bound on
\(\mathfrak M^{\mathrm{GMI}}_-(\delta;\varepsilon)\). The lower bound for
\(\mathfrak M^{\mathrm{GMI}}(\delta;\varepsilon)\) follows from
Corollary~\ref{cor:private-lower-minimax-quantile-relation}.
\end{proof}

\end{document}